\title{Fundamental Properties of Causal Entropy and Information Gain}
\let\todo\undefined %
\DeclareMathOperator{\E}{\mathbb{E}} 
\DeclareMathOperator{\cC}{\mathcal{C}} 
 \newcommand{\dop}{\mathit{do}}
 \newcommand{\PA}{\mathrm{PA}}
\newcommand{\veryshortrightarrow}{\!\!\shortrightarrow\!\!}
\newcommand{\ccolon}{\vcentcolon=}
\newcommandx{\warning}[2][1=]{\todo[linecolor=red,backgroundcolor=red!25,bordercolor=red,#1]{#2}}
\newcommandx{\mynote}[2][1=]{\todo[linecolor=blue,backgroundcolor=blue!25,bordercolor=blue,#1]{#2}}
\newcommandx{\myquestion}[2][1=]{\todo[linecolor=purple,backgroundcolor=purple!25,bordercolor=purple,#1]{#2}}
\newcommandx{\original}[2][1=]{\todo[linecolor=green,backgroundcolor=green!25,bordercolor=green,#1]{OG!:
    #2}}
\begin{document}

\maketitle

\begin{abstract}
  Recent developments enable the quantification of \emph{causal control} given a
  structural causal model (SCM). This has been accomplished by introducing
  quantities which encode changes in the entropy of one variable when
  intervening on another. These measures, named causal entropy and causal
  information gain, aim to address limitations in existing
  information theoretical approaches for machine learning tasks where causality
  plays a crucial role. They have not yet been properly mathematically studied.
  Our research contributes to the formal understanding of the notions of causal entropy and causal information gain by establishing and analyzing fundamental properties of these concepts, including bounds and chain rules.
  Furthermore, we elucidate the relationship between causal entropy
  and stochastic interventions.
  We also propose definitions for causal conditional entropy and causal conditional information gain.
  Overall, this exploration paves the way for enhancing causal machine learning tasks through the study of recently-proposed information theoretic quantities grounded in considerations about causality.%
\end{abstract}

\begin{keywords}%
  Structural Causal Models, Information Theory, Causal Inference
\end{keywords}

\section{Introduction}
\label{sec:introduction}
Information theoretical quantities are ubiquitous in machine learning.
Cross-entropy losses are commonly used in deep learning
\citep{goodfellow2016deep}.
Decision tree learning algorithms commonly use mutual information (often called information gain in that context) to decide what variable to split at each step \citep{gareth2013introduction} --- see for example the ID3 \citep{quinlan1986induction} and CR4.5 \citep{quinlan1993c4} algorithms.
In \citet{haarnoja2018soft}, reinforcement learning tasks see their stability and robustness improved when the agents learn by maximizing not only their expected rewards but also the entropy of their policies.
Also in \citet{seitzer2021reinforcement}, mutual information is used for reinforcement learning tasks to measure whether or not an agent's action has ``causal influence'' on a state, and they use this quantity to decide if said action has ``control'' over that state.
In \citet{achille2018information}, an optimal representation $\mathbf{Z}$ of a random vector $\mathbf{X}$ for a given task $\mathbf{Y}$ is learned by minimizing the information bottleneck lagrangian, which is a difference between the mutual informations $I(\mathbf{X}; \mathbf{Z})$ and $I(\mathbf{Y}; \mathbf{Z})$ \citep{tishby2000information}, the idea being that $\mathbf{Z}$ should keep as little information about $\mathbf{X}$ as possible while retaining as much information about $\mathbf{Y}$ as possible.
In \citet{holtgen2021encoding}, the aforementioned information bottleneck lagrangian is used in the loss function of an autoencoder in order to learn a causally-relevant representation for a given task.
The goal of this approach is to enable us to intervene on the representation instead of on the original variables without losing much control over the task variable.

In the cases just described the existence of confounders or selection bias could lead to misleading results if one ascribes a causal interpretation.
This is a common issue when using standard information
theoretical quantities in situations that require consideration of the
underlying causal relationships. A version of mutual information which takes
into account the causal structure of the system would solve this problem.
Preliminary work in this direction was recently done by \citet{simoes2023causal}
in the context of interpretable machine learning and inspired by the earlier
philosophical work of \citet{griffiths2015specificity}.
The former define
``causal entropy'' and ``causal information gain'' and study their relationship
with total causal effect. They also argue that causal information gain provides
an adequate measure of causal control, so that it can be used when
deciding which variables in an SCM provide more control over a chosen target
variable. However, a proper mathematical study of the properties of causal
entropy and causal information gain is missing.

In this paper, we relate causal entropy with other quantities such as conditional entropy and post-stochastic intervention entropy, revealing potentially more convenient approaches to its computation as
well as new interpretations of this quantity.
We also define conditional causal entropy and conditional causal information gain. Additionally, we derive
fundamental properties of both causal entropy and causal information gain,
drawing upon analogous results from information theory.

The novelty of our work consists of deriving key properties
of causal entropy and causal information gain for the first time, and defining conditional causal entropy and information gain.
Concretely:
\begin{itemize}
  \item We show that, perhaps unexpectedly, causal entropy is not the same as the entropy after a stochastic intervention, and establish formal relations between causal entropy and stochastic interventions.
        Furthermore, we show that causal entropy can be seen as a post-stochastic-intervention conditional entropy.
  \item We check that causal entropy is non-negative, but that, surprisingly, causal information
        gain can be negative.
        We also find upper bounds on the causal entropy, including an
        independence bound mirroring the independence bound on standard entropy.
        We check that, unexpectedly, the causal version of the data processing inequality does not hold.
  \item We define conditional causal entropy and conditional causal information gain for the first time.
        We use these to derive chain rules for both causal entropy and causal information gain.
        Finally, we discuss how alternative causal versions of conditional information gain are possible, and study one in particular.
\end{itemize}

This paper is organized as follows. \Cref{sec:formal-setting} introduces the
assumptions and the definitions of information theoretical quantities that will be
used throughout the paper, including causal entropy and
causal information gain. In \Cref{sec:causal-entr} we discuss how causal
entropy differs from the entropy of the post-stochastic-intervention
distribution.
We study how causal entropy can be linked to post-stochastic interventions, resulting in measures of causal entropy which can be both elucidating and easier to compute in practice.
We also establish lower and upper bounds on the causal entropy, define conditional causal entropy, and present some crucial findings that culminate in a chain rule for causal entropy.
\Cref{sec:causal-inf-gain} lays down fundamental properties of causal information gain, including a chain rule.
It also introduces conditional causal information gain and discusses its interpretation, along with the consideration of an alternative causal extension of conditional mutual information termed post-intervention mutual information.
\Cref{sec:related-work} compares the definitions
and results in this text with that of other work that has been done before. In
\Cref{sec:disc-concl} we discuss the results obtained in this work and propose
future avenues of research.
The proofs of all the results presented in this paper can be found in the appendix.

\section{Formal Setting}
\label{sec:formal-setting}

Many basic concepts from causal inference and information theory are used throughout this paper.
For completeness, we include the necessary definitions from causal inference in \Cref{sec:struct-caus-models}.
All random variables are henceforth assumed to be discrete and have finite
range.
In this paper, a ``random variable'' can be a random vector.
In cases where we want to restrict ourselves to random variables which are in fact random vectors, boldface is used.
\emph{E.g.} $X$ can be both a single-value random variable or a random vector, while $\mathbf{X}$ must be a random vector.
Furthermore, the symbols of the form $X$, $Y$, $Z$, $X_{i}$, $Y_{i}$ or
$Z_{i}$ are taken to be endogenous variables of some SCM~$\mathcal{C}$.

In this section we present the definitions from information theory which are
necessary for the rest of this paper. We also include the definitions of causal
entropy and causal information gain.

\subsection{Entropy and Mutual Information}
\label{sec:entr-mutu-inform}
In this subsection we will start by stating the definitions of entropy, conditional
entropy and mutual entropy. In the interest of space, we will not try to
motivate these definitions.
For more information, see \citet{thomas2006information}.
\begin{definition}[Entropy and Cond. Entropy
  \citep{thomas2006information}]
  \label[definition]{def:entr}
  Let $X$ be a discrete random variable with range $R_{X}$ and $p$ be a
  probability distribution for $X$. The \emph{entropy of $X$ w.r.t. the
    distribution $p$} is\footnotemark \footnotetext{In this article, $\log$
    denotes the logarithm to the base $2$.}
  \begin{equation}
    \label{eq:entr}
    H_{X \sim p}(X) \vcentcolon= -\sum_{x\in R_{X}} p(x) \log p(x).
  \end{equation}
  Entropy is measured in $\mathrm{bit}$.
  If the context suggests a canonical probability distribution for $X$, one can write $H(X)$ and refers to it simply as the \emph{entropy of $X$}. \\
  The \emph{conditional entropy} $H(Y\mid X)$ of $Y$ conditioned on $X$ is the
  expected value w.r.t. $p_{X}$ of the entropy
  $H(Y \mid X=x)\vcentcolon=H_{Y\sim p_{Y\mid
      X=x}}(Y)$:%
  \begin{equation}
    \label{eq:cond-entr}
    H(Y\mid X) \vcentcolon= \E_{x\sim p_{X}} \left[ H(Y \mid X=x) \right].
  \end{equation}
\end{definition}
This means that the conditional entropy $H(Y \mid X)$ is the entropy of $H(Y)$
that remains on average if one conditions on $X$.
\begin{remark}
  Notice that $H(Y \mid X=x)$ is seen as a function of $x$ and the expected
  value in \Cref{eq:cond-entr} is taken over the random variable $x$ with
  distribution $p_{X}$. This disrespects the convention that random variables
  are represented by capital letters, but preserves the convention that the
  specific value conditioned upon is represented by a lower case letter.
  We will follow the common practice and opt to use lower case letters for random variables in these cases.
\end{remark}

There are two common equivalent ways to define mutual information (often called
information gain).

\begin{definition}[Mutual Information and Cond. Mutual Information
  \citep{thomas2006information}]
  \label[definition]{def:mutual-information}
  Let $X$ and $Y$ be discrete random variables with ranges $R_{X}$ and $R_{Y}$
  and distributions $p_{X}$ and $p_{Y}$, respectively. The \emph{mutual
    information} between $X$ and $Y$ is
  \begin{equation}
    \label{eq:mi-independence-form}
    I(X; Y) := \!\!\!\! \sum_{x, y \in R_{X} \times R_{Y}} \!\!\!\! p_{X, Y}(x, y) \log \frac{p_{X, Y}(x, y)}{p_{X}(x) p_{Y}(y)}.
  \end{equation}
  Or equivalently:
  \begin{align}
    \begin{split}
      \label{eq:mi-entr-form}
      I(X; Y) &:= H(Y) - H(Y \mid X) \\
      &= H(X) - H(X \mid Y).
    \end{split}
  \end{align}
  Let $Z$ be another discrete random variable. The \emph{conditional mutual
    information} between $X$ and $Y$ conditioned on $Z$ is:
  \begin{align}
    \begin{split}
      \label{eq:cond-mut-info}
      I(X; Y \mid Z) &:= H(Y \mid Z) - H(Y \mid X, Z) \\
                    &= H(X \mid Z) - H(X \mid Y, Z).
    \end{split}
  \end{align}
\end{definition}

The view of mutual information as entropy reduction from \Cref{eq:mi-entr-form}
is the starting point for the definition of causal information gain.

\subsection{Causal Entropy and Causal Information Gain}
We will now define causal entropy and causal information gain.
See \citet{simoes2023causal} for a thorough discussion about these concepts.
The causal entropy of $Y$ for $X$ is the entropy of $Y$ that is left, on
average, after one atomically intervenes on $X$. It is defined in a manner
analogous to conditional entropy (see \Cref{def:entr}). Concretely, causal
entropy is the average uncertainty one has about $Y$ if one sets $X$ to $x$ with
probability $p_{X'}(x)$, where $X'$ is a new auxiliary variable with the same
range as $X$ but independent of all other variables, including $X$.
\begin{definition}[Causal Entropy, $H_{c}$ \citep{simoes2023causal}]
  Let $Y$, $X$ and $X'$ be random variables such that $X$ and $X'$ have the same
  range and $X'$ is independent of all variables in $\cC$. We say that $X'$ is
  an \emph{intervention protocol} for $X$.
  The \emph{causal entropy} $H_{c}(Y\mid \dop(X \sim X'))$ of $Y$ for $X$ given the
  intervention protocol $X'$ is the expected value w.r.t. $p_{X'}$ of
  the entropy
  $H(Y \mid \dop(X = x)) \vcentcolon= H_{Y \sim p_{Y}^{\dop(X=x)}}(Y)$ of the
  interventional distribution $p_{Y}^{\dop(X=x)}$. That is:
  \begin{equation}
    \label{eq:caus-cond-entr}
    H_{c}(Y\mid \dop(X \sim  X')) \vcentcolon= \E_{x\sim p_{X'}} \left[ H(Y \mid \dop(X=x)) \right].
  \end{equation}
\end{definition}

Causal information gain extends mutual information/information gain to the
causal context. While mutual information between two variables $X$ and $Y$ is
the average reduction in uncertainty about $Y$ if one observes the value of $X$
(see \Cref{eq:mi-entr-form}), the causal information gain of $Y$ for $X$ is
the average decrease in the entropy of $Y$ after one atomically intervenes on
$X$ (folowing an intervention protocol $X'$).
\begin{definition}[Causal Information Gain, $I_{c}$ \citep{simoes2023causal}]
  \label[definition]{def:Ic}
  Let $Y$, $X$ and $X'$ be random variables such that $X'$ is an intervention
  protocol for $X$. The \emph{causal information gain}
  $I_{c}(Y\mid \dop(X \sim X'))$ of $Y$ for $X$ given the intervention protocol
  $X'$ is the difference between the entropy of $Y$ w.r.t. its prior and the
  causal entropy of $Y$ for $X$ given the intervention protocol $X'$. That is:
  \begin{equation}
    \label{eq:caus-cond-entr}
    I_{c}(Y \mid \dop(X \sim X')) \vcentcolon= H(Y) - H_{c}(Y\mid \dop(X \sim  X')).
  \end{equation}
\end{definition}
The causal information gain of $Y$ for $X$ was proposed in \citet{simoes2023causal} as a measure of the ``(causal) control that variable $X$ has over the variable $Y$''.
This is a qualitative concept used in the philosophy of science literature \citep{pocheville2015comparing} and
defined in \citet{simoes2023causal} as the reduction of uncertainty about $Y$ that results from intervening on $X$.
This is precisely what causal information gain measures, by construction.
It is important to note that a common measure of causal strength such as average causal effect (ACE) would not be a suitable measure of causal control. Indeed, the uncertainty about $Y$ can be reduced by intervening on $X$ while maintaining the average of $p^{do(X=1)}_{Y}$ the same as that of  $p^{do(X=0)}_{Y}$, yielding an ACE of zero even though uncertainty is reduced by intervening on $X$.

\section{Properties of Causal Entropy}
\label{sec:causal-entr}
We start this section by showing that causal entropy is distinct from the
entropy after a stochastic intervention $\dop(X=X')$.
We study the relation of causal entropy with other quantities, providing new insights about causal entropy and establishing some first basic properties.
This section ends with the definition of conditional causal entropy and the derivation of a chain rule for causal entropy.

\subsection{Comparison with Entropy After a Stochastic Intervention}
\label{sec:comparison-with-post}
One could think that $H_{c}(Y\mid \dop(X\sim X')) = H(Y\mid \dop(X=X'))$:
both are entropies of $Y$ resulting from making $X$ follow the distribution $p_{X'}$, albeit through two distinct procedures.
While these may appear identical, this is, in reality, not accurate.
In other words, the average uncertainty about $Y$ after
\emph{atomically} intervening on $X$ by setting $X=x$ with probability $p_{X'}$
is not the same as the average uncertainty after \emph{stochastically}
intervening on $X$ by setting $X$ to $X'$. \Cref{ex:h_differs_from_hc} will
illustrate this. The underlying reason for this difference will be made clear by
\Cref{prop:Hc-as-E-cov-eff}.

\begin{example} %
  \label{ex:h_differs_from_hc}
  We will look at an example where
  $H_{c}(Y\mid \dop(X\sim X')) \ne H(Y\mid \dop(X=X'))$.
  Consider the SCM over the variables $X, Y$ with ranges $R_{X} = \{0,1\}$ and
  $R_{Y} = \{0,1,2\}$, characterized by the following structural
  assignments and noise distributions:
  \begin{equation}
    \label{eq:scm-example-h-diff-from-hc}
    \begin{cases}
      f_{X}(N_{X}) = N_{X} \\
      f_{Y}(X, N_{Y}) = X + N_{Y} \\
      N_{X}, N_{Y} \sim \mathrm{Bern}(\frac{1}{2})
    \end{cases}
  \end{equation}
  Notice that the causal graph is then simply $X \rightarrow Y$.
  Further, let $X'$ be an intervention protocol for $X$ with
  $p_{X'} = \mathrm{Bern}(\frac{1}{3})$. The atomic and stochastic interventions
  on $Y$ can then be written as in \Cref{tab:post-int-dist-example}.
  \begin{table}[h!]
    \centering
    \begin{tabular}{llll}
      \toprule %
      $Y$ & $p_{Y}^{\dop(X=0)}$ & $p_{Y}^{\dop(X=1)}$ & $p_{Y}^{\dop(X=X')}$\\
      \midrule %
      $0$ & $\nicefrac{1}{2}$ & $0$ & $\nicefrac{1}{3}$\\
      $1$ & $\nicefrac{1}{2}$ & $\nicefrac{1}{2}$ & $\nicefrac{1}{2}$\\
      $2$ & $0$ & $\nicefrac{1}{2}$ & $\nicefrac{1}{6}$\\
      \bottomrule %
    \end{tabular}
    \caption{Post-intervention distributions for $Y$ in
      \Cref{ex:h_differs_from_hc}. (Computed in \Cref{comp:h-from-table}).}    \label{tab:post-int-dist-example}
  \end{table}
  Hence\footnotemark $H_{c}(Y \mid \dop(X \sim X')) = 1 (\mathrm{bit})$ and:
  \begin{equation}
    H(Y\mid \dop(X=X'))
    =\frac{1}{3}\underbrace{\log 3}_{>1} +\, \frac{1}{2} + \frac{1}{6}\underbrace{\log 6}_{>1} > 1 (\mathrm{bit}).
  \end{equation}
  \footnotetext{The details of the computations of these entropies can be found
    in \Cref{comp:h-from-table}.} Thus in particular
  $H(Y\mid \dop(X=X')) \ne H_{c}(Y\mid \dop(X\sim X'))$ in this example.
\end{example}

\begin{remark}[Intuition for \Cref{ex:h_differs_from_hc}]
  \label{rem:example-explanation-hc-differs-from-hc}
  The difference between causal and post-intervention entropies observed in
  \Cref{ex:h_differs_from_hc} stems from the fact that, since $Y = X + N_{Y}$,
  fixing X renders the distribution of Y equal in shape to that of $N_{Y}$.
  Hence the post-atomic intervention entropies $H_{Y\sim p_{Y}^{\dop(X=x)}}(Y)$ are all the same
  (1 bit), so that averaging just gives 1 bit. Notice that this does not depend
  on the choice of $p_{X'}$. Now, $H(Y\mid \dop(X=X'))$ is the entropy of the
  distribution of the sum of random variables $X'+N_{Y}$, meaning that in this
  case the shape of the distribution of $Y$ is changed, not simply shifted.
\end{remark}

\subsection{Alternative Views of Causal Entropy}
\label{sec:prop-caus-entr}
We will now see that causal entropy can be written as an expected value over a post-stochastic intervention joint distribution.
It is often useful to have an expression for a statistic as an average w.r.t. the joint distribution of the variables involved.
This enables, for instance, the straightforward construction of an estimator for the statistic, known as the ``plug-in estimator'', achieved by substituting the joint distribution figuring in the expected value by the empirical joint distribution \mbox{\citep{wasserman2004all}}.
\begin{proposition}[Causal Entropy as a Single Expected Value]
  \label[proposition]{prop:cH-single-E}
  The causal entropy of $Y$ given the intervention protocol $X'$ for $X$ can be
  written as an expected value w.r.t. the post-stochastic-intervention
  distribution $p_{X,Y}^{\dop(X=X')}$ as follows:
  \begin{align}
    \begin{split}
      \label{eq:hc_single_E_expression_proposition}
      H_{c}(Y&\mid \dop(X\sim X')) =- \E_{x,y \sim p_{X,Y}^{\dop(X=X')}} \Big[ \log p(y \mid \dop(X=x)) \Big].
    \end{split}
  \end{align}
\end{proposition}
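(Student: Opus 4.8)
The plan is to reduce both sides to explicit sums over $R_{X} \times R_{Y}$ and match them, with the whole argument hinging on a single factorization identity for the post-stochastic-intervention joint distribution. First I would expand the left-hand side: starting from the definition $H_{c}(Y\mid \dop(X \sim X')) = \E_{x\sim p_{X'}}\left[H(Y \mid \dop(X=x))\right]$ and unfolding the inner post-atomic-intervention entropy $H(Y\mid \dop(X=x)) = H_{Y\sim p_{Y}^{\dop(X=x)}}(Y)$ via \Cref{def:entr}, one obtains
\begin{equation}
  H_{c}(Y\mid \dop(X\sim X')) = -\sum_{x} p_{X'}(x) \sum_{y} p(y \mid \dop(X=x)) \log p(y \mid \dop(X=x)).
\end{equation}
The goal is then to recognize the resulting double sum as the expected value of $-\log p(y \mid \dop(X=x))$ against the joint $p_{X,Y}^{\dop(X=X')}$.

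The key step is the factorization
\begin{equation}
  p_{X,Y}^{\dop(X=X')}(x,y) = p_{X'}(x)\, p(y \mid \dop(X=x)).
\end{equation}
I would justify this from the semantics of the stochastic intervention $\dop(X=X')$: replacing the structural assignment of $X$ by the auxiliary variable $X'$ (which is independent of all variables in $\cC$) leaves the marginal of $X$ equal to $p_{X'}$, while the mechanisms generating $Y$ are untouched. Consequently, conditioning the post-intervention distribution on $X=x$ coincides with the atomic interventional distribution $p_{Y}^{\dop(X=x)}$, that is, $p^{\dop(X=X')}(y \mid x) = p(y \mid \dop(X=x))$, and multiplying by the marginal $p_{X'}(x)$ yields the claimed factorization.

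With this identity in hand, the remainder is routine: substituting the factorization into the definition of the expected value on the right-hand side and rearranging the finite double sum reproduces exactly the expression for $H_{c}$ derived above, which closes the proof. The main obstacle is the factorization step itself --- specifically, arguing cleanly that under a stochastic intervention drawn independently according to $p_{X'}$ the conditional law of $Y$ given $X=x$ is precisely the atomic interventional law $p_{Y}^{\dop(X=x)}$. This is where the independence of the intervention protocol $X'$ from the rest of the SCM is essential; once that is granted, everything else is a matter of interchanging finite sums and relabelling.
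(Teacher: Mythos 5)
Your proposal is correct and follows essentially the same route as the paper: both expand the causal entropy as a double sum and then invoke the factorization $p_{X,Y}^{\dop(X=X')}(x,y) = p_{X}^{\dop(X=X')}(x)\, p_{Y\mid X=x}^{\dop(X=X')}(y)$ together with the identities $p_{X}^{\dop(X=X')} = p_{X'}$ and $p_{Y\mid X=x}^{\dop(X=X')} = p_{Y}^{\dop(X=x)}$, the latter being exactly the paper's \Cref{lemma:atomic-vs-stochastic}, which you justify by the same informal argument (the stochastic intervention removes the parents of $X$, so conditioning afterwards coincides with atomically intervening) that the paper formalizes in its appendix.
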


We will now see how causal entropy relates with post-stochastic intervention entropies.
We use $H(Y \mid X=x, \dop(X=X'))$ as notation for the entropy $H_{Y \sim p_{Y \mid X=x}^{\dop(X=X')}}(Y)$ of the covariate-specific effect\footnotemark $p_{Y \mid X=x}^{\dop(X=X')}$.
\footnotetext{The term ``covariate-specific effect'' is commonly used when conditioning on a variable distinct from the intervened variable. In this paper we use the term also when the conditioned variable coincides with the intervened variable.}
That is, $H(Y \mid X=x, \dop(X=X'))$ is the entropy resulting from conditioning on $X=x$ \emph{after} having performed the stochastic intervention $\dop(X=X')$.
\begin{proposition}[Causal Entropy as Average Entropy of Covariate-Specific Effects]
  \label[proposition]{prop:Hc-as-E-cov-eff}
  The causal entropy of $Y$ given the intervention protocol $X'$ for $X$ can be seen as the expected value w.r.t. $x\sim p_{X'}$ of the entropies
  $H(Y \mid X=x, \dop(X=X'))$ of the
  covariate-specific effects $p_{Y \mid X=x}^{\dop(X=X')}$.
  That is:
  \begin{align}
    \begin{split}
    \label{eq:Hc_is_average_covspeceffect}
      H_{c}(Y\mid \dop(X \sim  X')) &= \E_{x \sim p_{X'}}\left[H(Y\mid X = x, \dop(X=X'))\right] \\
                                   &= H(Y\mid X, \dop(X=X'))
    \end{split}
  \end{align}
  where the notation used in the second equality is analogous to the notation for \mbox{conditional entropy}.
\end{proposition}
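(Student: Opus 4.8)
The plan is to reduce the proposition to a single pointwise statement about the distributions involved. Both $H_{c}(Y\mid \dop(X\sim X'))$ (by its definition) and the middle expression in \Cref{eq:Hc_is_average_covspeceffect} are expectations with respect to $x\sim p_{X'}$, so it suffices to show that their integrands agree for every $x$ with $p_{X'}(x)>0$, namely that $H(Y\mid \dop(X=x)) = H(Y\mid X=x,\dop(X=X'))$. Since each side is the Shannon entropy of a distribution on $R_{Y}$, this in turn reduces to establishing the distributional identity
\begin{equation*}
  p_{Y}^{\dop(X=x)} = p_{Y\mid X=x}^{\dop(X=X')}
  \qquad \text{for all } x \text{ with } p_{X'}(x)>0 .
\end{equation*}

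To prove this identity I would argue structurally. After the stochastic intervention $\dop(X=X')$ the mechanism of $X$ is replaced by the assignment $X:=X'$; repeatedly substituting the remaining structural equations then expresses $Y$ as a deterministic function $Y=g(X',N)$ of $X'$ and the relevant noise variables $N$. Because $X'$ is an intervention protocol, it is independent of every variable in $\cC$, and in particular $X' \ind N$. Hence, for $x$ in the support of $p_{X'}$,
\begin{equation*}
  p_{Y\mid X=x}^{\dop(X=X')}(y) = \Pr\big(g(X',N)=y \mid X'=x\big) = \Pr\big(g(x,N)=y\big),
\end{equation*}
where the conditioning is discharged using $X'\ind N$. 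The right-hand side is exactly the law of $Y$ obtained by fixing $X=x$ and propagating the unchanged noise through the remaining mechanisms, i.e. $p_{Y}^{\dop(X=x)}(y)$. Equivalently, one may verify the identity through the truncated factorization: writing the post-intervention joint as $p^{\dop(X=X')}(v)=p_{X'}(x)\prod_{V_i\neq X}p(v_i\mid \PA_i)$, the induced marginal of $X$ is $p_{X'}$, and dividing the $(X,Y)$-marginal by it cancels the factor $p_{X'}(x)$, leaving precisely the truncated product that defines $p_{Y}^{\dop(X=x)}$.

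With the pointwise identity in hand, the first equality in \Cref{eq:Hc_is_average_covspeceffect} follows immediately by taking the expectation of the (now equal) integrands over $x\sim p_{X'}$, starting from the definition of causal entropy. The second equality is then a matter of notation only: since the marginal of $X$ under $\dop(X=X')$ is $p_{X'}$, the average $\E_{x\sim p_{X'}}[H(Y\mid X=x,\dop(X=X'))]$ is exactly what the conditional-entropy convention of \Cref{def:entr}, applied to the post-stochastic-intervention distribution $p^{\dop(X=X')}$, abbreviates as $H(Y\mid X,\dop(X=X'))$.

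The main obstacle I anticipate is the careful justification of the distributional identity $p_{Y}^{\dop(X=x)}=p_{Y\mid X=x}^{\dop(X=X')}$: one must be explicit that $Y$ can be written as a function of $X'$ and of noise terms that are independent of $X'$, and one must restrict conditioning to the support of $p_{X'}$, where it is well defined (values $x$ with $p_{X'}(x)=0$ receive zero weight in the expectation and may be ignored). Once this identity is secured, the remaining steps are a routine interchange with expectation together with an appeal to the naming convention for conditional entropy.
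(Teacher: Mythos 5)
Your proposal is correct and follows essentially the same route as the paper: you reduce the proposition, via the definition of causal entropy, to the pointwise distributional identity $p_{Y}^{\dop(X=x)} = p_{Y\mid X=x}^{\dop(X=X')}$, which is exactly the paper's \Cref{lemma:atomic-vs-stochastic}, and then take expectations over $x\sim p_{X'}$. The only difference is that you re-derive that lemma inline (by writing $Y=g(X',N)$ with $X'\ind N$, or via the truncated factorization) rather than citing it, which is an equivalent justification to the paper's argument that after $\dop(X=X')$ the variable $X$ is parentless, so a further atomic intervention coincides with conditioning.
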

This result further elucidates the origin of the difference discussed in \Cref{ex:h_differs_from_hc}.
Concretely, \Cref{prop:Hc-as-E-cov-eff} shows us that $H_{c}(Y\mid \dop(X \sim X'))$ is the
average of the entropies of $Y$ obtained from stochastically intervening (according to
$p_{X'}$) AND knowing the value that $X$ was set to, while
$H(Y\mid \dop(X = X'))$ is simply the entropy of $Y$ after performing the stochastic intervention -- which we can interpret as having performed an atomic intervention on $X$, but not knowing exactly which value $X$ was set to.

Notice that \Cref{prop:Hc-as-E-cov-eff} implies that the causal entropy is a \emph{bona fide} conditional entropy, where the conditioning is performed after a stochastic intervention setting $X=X'$.
Indeed, this is precisely the meaning of $H(Y \mid X, \dop(X=X'))$.
Consequently, we can harness known properties of conditional entropy to prove properties of causal entropy.

\subsection{Bounds on Causal Entropy}
\label{sec:comparison-hc-and-ind-bound}
Just like conditional entropy, causal entropy is a non-negative quantity. This
follows simply from the fact that causal entropy is an average of entropies,
which are themselves non-negative.
\begin{proposition}
  \label[proposition]{prop:hc-nonnegative}
  Causal entropy is non-negative.
\end{proposition}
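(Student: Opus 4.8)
The plan is to reduce the claim to the elementary fact that the Shannon entropy of any discrete distribution is non-negative, and then observe that causal entropy is a non-negatively-weighted average (an expectation) of such entropies. First I would unfold the definition,
\begin{equation*}
  H_{c}(Y\mid \dop(X \sim X')) = \E_{x\sim p_{X'}} \left[ H(Y \mid \dop(X=x)) \right] = \sum_{x \in R_{X}} p_{X'}(x)\, H(Y \mid \dop(X=x)),
\end{equation*}
where each $H(Y \mid \dop(X=x)) = H_{Y \sim p_{Y}^{\dop(X=x)}}(Y)$ is the ordinary entropy of the interventional distribution $p_{Y}^{\dop(X=x)}$, which is a genuine probability distribution on $R_{Y}$.

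Next I would verify that each such entropy is non-negative: for any probability distribution $p$ on a finite set $R_{Y}$ we have $H_{Y \sim p}(Y) = -\sum_{y \in R_{Y}} p(y) \log p(y)$, and since $0 \le p(y) \le 1$ gives $\log p(y) \le 0$, every summand $-p(y)\log p(y)$ is non-negative (adopting the standard convention $0 \log 0 = 0$), so the sum is non-negative. Finally, because the weights $p_{X'}(x)$ are non-negative and each term $H(Y \mid \dop(X=x))$ is non-negative, the weighted sum is non-negative, yielding $H_{c}(Y\mid \dop(X \sim X')) \ge 0$.

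There is no genuine obstacle here; the statement is essentially immediate once it is phrased as an average of non-negative entropies. The only point requiring any care is the convention $0 \log 0 = 0$, which is what makes the per-term non-negativity hold at the boundary of the probability simplex. As an alternative route, one could instead appeal to \Cref{prop:Hc-as-E-cov-eff}, which identifies causal entropy with the \emph{bona fide} conditional entropy $H(Y \mid X, \dop(X=X'))$, and then invoke the well-known non-negativity of conditional entropy directly.
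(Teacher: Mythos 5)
Your proof is correct and follows essentially the same route as the paper's: unfold the definition of causal entropy as an expectation of post-atomic-intervention entropies and invoke the non-negativity of ordinary entropy. The extra detail about the $0 \log 0 = 0$ convention and the alternative route via \Cref{prop:Hc-as-E-cov-eff} are fine but not needed.
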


The following result is not so much a property of causal entropy as it is the absence
of one -- namely, causal entropy is not necessarily smaller than the initial entropy.
This can be surprising, since it is in stark contrast with conditional entropy,
which is always less that the initial entropy.
Meaning that, on average, information about the conditioning variable $X$ can
never increase the uncertainty about $Y$ \citep{thomas2006information}.
In contrast, \Cref{prop:Hc-lessthan-H} tells us that, on average, intervening on $X$ can in
fact increase the uncertainty about $Y$.
\begin{proposition}
  \label[proposition]{prop:Hc-lessthan-H}
  For some SCMs and intervention protocols $X'$, the causal entropy of $Y$ for $X$ given an intervention protocol $X'$ is greater than the initial entropy of $Y$, \emph{i.e.} $H_{c}(Y \mid \dop(X \sim X')) > H(Y)$.
\end{proposition}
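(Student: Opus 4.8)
The statement is existential (``for some SCMs and intervention protocols''), so the plan is simply to exhibit one SCM together with one intervention protocol $X'$ for which $H_{c}(Y \mid \dop(X \sim X')) > H(Y)$; no general argument is required. The guiding intuition is that $H(Y)$ is the entropy of the \emph{observational} marginal of $Y$, which in a graph $X \to Y$ without confounding equals the entropy of the mixture $\sum_{x} p_{X}(x)\, p_{Y}^{\dop(X=x)}$, whereas the causal entropy is the \emph{average} $\E_{x \sim p_{X'}}[H(Y \mid \dop(X=x))]$ of the entropies of the interventional distributions. When $p_{X'} = p_{X}$, concavity of entropy (Jensen's inequality) forces $H(Y) \geq H_{c}(Y \mid \dop(X \sim X'))$, so the inequality we want runs the ``wrong'' way. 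The key realization is therefore that one must decouple the intervention protocol $p_{X'}$ from the observational law $p_{X}$: choose $p_{X}$ so that the observational distribution of $Y$ concentrates on a low-entropy interventional regime, and choose $X'$ so that it places its mass on interventions producing a high-entropy $Y$.

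Concretely, I would take $R_{X} = R_{Y} = \{0,1\}$ with structural assignments $f_{X}(N_{X}) = N_{X}$ and $f_{Y}(X, N_{Y}) = X \cdot N_{Y}$ and noises $N_{X}, N_{Y} \sim \mathrm{Bern}(\tfrac{1}{2})$, so that the causal graph is $X \to Y$. Under an atomic intervention, $p_{Y}^{\dop(X=0)}$ is the point mass at $0$ and hence has entropy $0$, while $p_{Y}^{\dop(X=1)} = \mathrm{Bern}(\tfrac{1}{2})$ has entropy $1$ bit. The observational marginal satisfies $p_{Y}(1) = \tfrac{1}{4}$, giving $H(Y) = H_{b}(\tfrac{1}{4}) \approx 0.811 < 1$, where $H_{b}$ denotes the binary entropy. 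Now pick the intervention protocol concentrated on $X=1$, i.e.\ $p_{X'} = \mathrm{Bern}(p)$ with $p$ close to $1$ (or $p = 1$). Then, directly from the definition of causal entropy, $H_{c}(Y \mid \dop(X \sim X')) = p \cdot 1 + (1-p)\cdot 0 = p$ bit, so for $p$ sufficiently close to $1$ we obtain $H_{c}(Y \mid \dop(X \sim X')) > H(Y)$, as desired. (If an even larger gap is wanted, one can instead concentrate $p_{X}$ near $0$, driving $H(Y)$ toward $0$ while keeping $H_{c}$ at nearly $1$ bit.)

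The remaining steps are routine verifications: that $X'$ is a legitimate intervention protocol (a constant is independent of every variable in $\cC$ and has the required range $\{0,1\}$), together with the two entropy computations, which follow the same pattern as those behind \Cref{ex:h_differs_from_hc}. The main obstacle is conceptual rather than computational: one has to notice that the ``natural'' comparison via Jensen's inequality yields the opposite bound, and hence that the example must exploit a mismatch between $p_{X'}$ and $p_{X}$ (or, alternatively, confounding) rather than a symmetric situation. Once this is recognized, the construction and the arithmetic are immediate.
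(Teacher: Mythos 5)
Your proposal is correct and follows essentially the same route as the paper: both exhibit a counterexample SCM in which $Y$ is deterministic under one atomic intervention and uniform under the other, so that a suitable mismatch between $p_{X'}$ and $p_{X}$ makes the average interventional entropy exceed $H(Y)$ (the paper skews $p_{X}$ toward the low-entropy regime with $N_X \sim \mathrm{Bern}(\nicefrac{1}{10})$ and keeps $X' \sim \mathrm{Bern}(\nicefrac{1}{2})$, whereas you keep $p_X$ uniform and skew $X'$ toward the high-entropy regime). Your Jensen's-inequality remark explaining why such a mismatch is unavoidable is a correct and useful observation that the paper does not make explicit.
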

There is however an upper bound on causal entropy.
It follows immediately from \Cref{prop:Hc-as-E-cov-eff} that causal entropy, being related to the entropy after the stochastic intervention $\dop(X=X')$ by conditioning, cannot be greater than the latter.
\begin{corollary}
  \label[corollary]{cor:Hc-lessthan-stochH}
  The causal entropy of $Y$ for $X$ given an intervention protocol $X'$
  cannot be greater than the post-stochastic intervention entropy, \emph{i.e.}
  $ H_{c}(Y \mid \dop(X \sim X')) \le H(Y \mid \dop(X = X')). $
\end{corollary}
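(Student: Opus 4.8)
The plan is to reduce the claim to the standard information-theoretic fact that conditioning never increases entropy, applied within the post-stochastic-intervention distribution. First I would invoke \Cref{prop:Hc-as-E-cov-eff} to rewrite the left-hand side as a conditional entropy:
\[
  H_{c}(Y \mid \dop(X \sim X')) = H(Y \mid X, \dop(X=X')).
\]
This is the crucial move, since it converts causal entropy --- which is defined as an average of interventional entropies --- into an honest conditional entropy computed with respect to the single fixed joint distribution $p_{X,Y}^{\dop(X=X')}$.

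Next I would observe that $p_{X,Y}^{\dop(X=X')}$ is itself a legitimate joint probability distribution over the pair $(X,Y)$. Both quantities appearing in the corollary, namely $H(Y \mid X, \dop(X=X'))$ and $H(Y \mid \dop(X=X'))$, are entropies taken with respect to this same distribution: the former is the conditional entropy of $Y$ given $X$, and the latter is the marginal entropy of $Y$. Consequently, the familiar inequality that conditioning reduces entropy --- that is, $H(Y \mid X) \le H(Y)$ from \citet{thomas2006information} --- applies verbatim inside this distribution, giving
\[
  H(Y \mid X, \dop(X=X')) \le H(Y \mid \dop(X=X')).
\]
Chaining this with the equality from the first step yields the desired bound.

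Because this is a direct corollary, I do not anticipate a substantive obstacle. The only point requiring a moment's care is justifying that the standard conditioning-reduces-entropy inequality transfers to the interventional setting; this is immediate once one recognizes that, after fixing the intervention protocol $X'$, the post-stochastic-intervention object $p_{X,Y}^{\dop(X=X')}$ is an ordinary joint distribution, so every elementary property of Shannon entropy holds within it without modification. Equivalently, one could note that the gap $H(Y \mid \dop(X=X')) - H(Y \mid X, \dop(X=X'))$ is exactly the mutual information between $X$ and $Y$ computed in $p_{X,Y}^{\dop(X=X')}$, which is non-negative.
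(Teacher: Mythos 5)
Your argument is correct and is essentially identical to the paper's own proof: both invoke \Cref{prop:Hc-as-E-cov-eff} to recognize the causal entropy as the conditional entropy $H(Y \mid X, \dop(X=X'))$ within the post-stochastic-intervention distribution, and then apply the standard fact that conditioning cannot increase entropy. Your added remark identifying the gap with the (non-negative) mutual information under $p_{X,Y}^{\dop(X=X')}$ is a nice sanity check but not a departure from the paper's route.
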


One can make use of the connection between causal
entropy and post-stochastic intervention conditional entropy together with the independence bound on entropy
\citep[Theorem 2.6.6]{thomas2006information} to derive an independence bound on
causal entropy.
\begin{proposition}[Independence Bound on Causal Entropy]
  \label[proposition]{prop:ind-bound}
  Let $\mathbf{Y}$ be a random vector of length $n_{Y}$. Then:
  \begin{equation}
    H_{c}(\mathbf{Y} \mid \dop(X \sim X')) \le
    \sum_{i=1}^{n_{Y}} H_{c}(Y_{i} \mid \dop(X \sim X')).
  \end{equation}
  and equality holds if and only if the $Y_{i}$ are independent.
\end{proposition}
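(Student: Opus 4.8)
The plan is to reduce the statement to the ordinary independence bound on entropy (Theorem~2.6.6 of \citet{thomas2006information}) applied \emph{inside} each interventional distribution, and then average over the intervention protocol. Starting from the definition of causal entropy for the random vector $\mathbf{Y} = (Y_{1}, \ldots, Y_{n_{Y}})$,
\[
  H_{c}(\mathbf{Y} \mid \dop(X\sim X')) = \E_{x\sim p_{X'}}\left[ H(\mathbf{Y} \mid \dop(X=x)) \right],
\]
I would fix a value $x$ with $p_{X'}(x) > 0$ and regard $\mathbf{Y}$ as a random vector distributed according to the interventional distribution $p_{\mathbf{Y}}^{\dop(X=x)}$. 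Since $H(\mathbf{Y}\mid\dop(X=x))$ is by definition the entropy of $\mathbf{Y}$ under $p_{\mathbf{Y}}^{\dop(X=x)}$, the standard independence bound applies verbatim in this distribution and gives
\[
  H(\mathbf{Y} \mid \dop(X=x)) \le \sum_{i=1}^{n_{Y}} H(Y_{i} \mid \dop(X=x)).
\]

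Next I would take the expectation of both sides with respect to $x\sim p_{X'}$. As expectation is monotone and linear, the left-hand side becomes $H_{c}(\mathbf{Y} \mid \dop(X\sim X'))$, while the right-hand side becomes $\sum_{i} \E_{x\sim p_{X'}}[H(Y_{i} \mid \dop(X=x))] = \sum_{i} H_{c}(Y_{i} \mid \dop(X\sim X'))$, by the definition of causal entropy applied to each component $Y_{i}$. This yields the claimed inequality. Equivalently, one can route the entire argument through \Cref{prop:Hc-as-E-cov-eff}, viewing $H_{c}$ as the conditional entropy $H(\mathbf{Y}\mid X, \dop(X=X'))$ and invoking the conditional form of the independence bound obtained from the chain rule together with the fact that conditioning reduces entropy; this is the sense in which the connection to post-stochastic-intervention conditional entropy is being exploited.

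For the equality condition I would track when the pointwise bound is tight. Theorem~2.6.6 is an equality for a fixed $x$ precisely when $Y_{1}, \ldots, Y_{n_{Y}}$ are mutually independent under $p_{\mathbf{Y}}^{\dop(X=x)}$. Writing the averaged inequality as a $p_{X'}$-weighted average of the nonnegative gaps $\sum_{i} H(Y_{i}\mid\dop(X=x)) - H(\mathbf{Y}\mid\dop(X=x)) \ge 0$, the average vanishes if and only if each summand vanishes for every $x$ with $p_{X'}(x) > 0$. Hence equality holds exactly when the $Y_{i}$ are independent in the interventional distribution $p_{\mathbf{Y}}^{\dop(X=x)}$ for every such $x$.

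The main obstacle is not the inequality, which is routine, but stating the equality condition precisely: the bare phrase ``the $Y_{i}$ are independent'' must be read as independence \emph{within each interventional distribution} $p_{\mathbf{Y}}^{\dop(X=x)}$ over the support of $p_{X'}$, rather than as marginal or observational independence. I would therefore make this qualification explicit and verify both implications separately---that pointwise independence on the support forces the averaged inequality to be an equality, and conversely that a strictly positive gap at any $x$ with $p_{X'}(x) > 0$ renders the averaged inequality strict.
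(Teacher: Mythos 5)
Your argument is correct, and it is essentially the route the paper takes, just unpacked: the paper writes $H_{c}(\mathbf{Y}\mid\dop(X\sim X')) = H(\mathbf{Y}\mid X,\dop(X=X'))$ via \Cref{prop:Hc-as-E-cov-eff} and applies the conditional form of the independence bound in one line, whereas you apply the unconditional bound pointwise inside each $p_{\mathbf{Y}}^{\dop(X=x)}$ and then average over $p_{X'}$; the two computations are term-for-term the same. Where your write-up genuinely adds value is the equality condition: the paper's proof does not address it at all, and the proposition's bare phrase ``the $Y_{i}$ are independent'' is ambiguous. Your reading --- that equality holds if and only if the $Y_{i}$ are mutually independent under $p_{\mathbf{Y}}^{\dop(X=x)}$ for every $x$ in the support of $p_{X'}$ (equivalently, conditionally independent given $X$ under the post-stochastic-intervention distribution), which is \emph{not} the same as observational or marginal independence --- is the correct one, and your argument via the vanishing of each nonnegative gap in the $p_{X'}$-weighted average establishes it cleanly. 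The only thing to be careful about is to state that qualification explicitly rather than leaving ``independent'' unqualified, which you already propose to do.
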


\subsection{Conditional Causal Entropy and the Chain Rule}
\label{sec:cond-caus-entr}
We can of course mix intervening and conditioning. In this section we will start
by defining conditional causal entropy $H_{c}(Y \mid Z, \dop(X\sim X'))$, which
will capture the uncertainty that we have about $Y$ given that we intervened on
$X$ according to the intervention protocol $X'$ and then conditioned on $Z$.
We will then see that, unsurprisingly, conditioning reduces causal entropy on average,
and that, just like causal entropy, conditional causal entropy can also be seen
as a conditional entropy after a stochastic intervention. We will conclude this
section with a chain rule for causal entropy.
\begin{definition}[Conditional Causal Entropy] %
  \label[definition]{def:cond-hc}
  Let $X$, $Y$ and $Z$ be endogenous variables of an SCM $\mathcal{C}$ and
  $x \in R_{X}$. The \emph{atomic conditional causal entropy}
  $H_{c}(Y \mid Z, \dop(X=x))$ of $Y$ conditioned on $Z$ for the atomic
  intervention $\dop(X=x)$ is defined as the post-atomic intervention conditional entropy
  of $Y$ conditioned on $Z$, \emph{i.e.}:
  \begin{align}
    \begin{split}
      H_{c}(Y \mid Z, \dop(X=x)) &\vcentcolon= \E_{z \sim p_{Z}^{\dop(X=x)}} [H(Y \mid Z=z, \dop(X=x))]\\
                &= \E_{z \sim p_{Z}^{\dop(X=x)}} [H_{Y\sim p_{Y \mid Z=z}^{\dop(X=x)}} (Y)] .
    \end{split}
  \end{align}
  Moreover, the \emph{conditional causal entropy} $H_{c}(Y \mid Z, \dop(X \sim X'))$ of
  $Y$ conditioned on $Z$ given the intervention protocol $X'$ for $X$ is defined as the expected value given the intervention protocol $X'$ of the atomic conditional causal entropies, \emph{i.e.}:
  \begin{equation}
    H_{c}(Y \mid Z, \dop(X\sim X')) := \E_{x\sim p_{X'}}[H_{c}(Y \mid Z, \dop(X=x))].
  \end{equation}
\end{definition}
Notice that our definition of conditional causal entropy assumes that the intervention precedes the conditioning operation.
This assumption will hold throughout the paper.
A definition of a ``condition-first conditional causal entropy'' where one intervenes after conditioning would be more involved, and demands the use of counterfactuals.
This quantity should coincide with the conditional causal entropy here defined whenever $X$ does not have a causal effect on $Z$ (see \emph{e.g.} \Cref{fig:causal-graphs-examples}{(b)})--- in those cases, conditioning before or after intervening will result in the same distribution.
In the interest of space, we leave further discussion about this topic for future work.

Similarly to causal entropy, conditional causal entropy can also be regarded as the entropy of the distribution resulting from conditioning on $X$ following the stochastic intervention $\dop(X=X')$.
They differ only in the conditioning set.
\begin{proposition}[Conditional Causal Entropy as Conditional Entropy]
  \label[proposition]{prop:cond-hc-as-cond-entr}
  Let $X$, $Y$, $Z$ and $X'$ be as in \Cref{def:cond-hc}.
  Then:
  \begin{equation}
    H_{c}(Y \mid Z, \dop(X\sim X')) = H(Y \mid Z, X, \dop(X=X')).
  \end{equation}
\end{proposition}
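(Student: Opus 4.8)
The plan is to expand both sides into (possibly nested) expectations and match them term by term, relying on a single fact about stochastic interventions. Following the convention for conditional entropy in \Cref{def:entr} (now applied after the stochastic intervention $\dop(X=X')$), the right-hand side unfolds as
\[
  H(Y \mid Z, X, \dop(X=X')) = \E_{(z,x) \sim p_{Z,X}^{\dop(X=X')}}\Big[ H_{Y \sim p_{Y \mid Z=z, X=x}^{\dop(X=X')}}(Y)\Big],
\]
while \Cref{def:cond-hc} gives the left-hand side as the nested expectation $\E_{x \sim p_{X'}}\big[\E_{z \sim p_{Z}^{\dop(X=x)}}[H(Y \mid Z=z, \dop(X=x))]\big]$. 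I would prove equality by showing that the averaging measure $p_{Z,X}^{\dop(X=X')}$ factors in exactly the way the nested expectation prescribes, and that the inner entropies coincide.

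The crux --- and the only step with genuine probabilistic content --- is the observation that conditioning on $\{X=x\}$ after the stochastic intervention recovers the atomic interventional distribution. Since $\dop(X=X')$ replaces the mechanism of $X$ by the auxiliary variable $X'$, which by assumption is independent of every noise variable in $\cC$, the joint law of the remaining variables given $\{X=x\}$ under $\dop(X=X')$ equals the atomic interventional law, and the induced marginal of $X$ is $p_{X'}$; that is,
\[
  p_{Z,Y \mid X=x}^{\dop(X=X')} = p_{Z,Y}^{\dop(X=x)} \qquad\text{and}\qquad p_{X}^{\dop(X=X')} = p_{X'}.
\]
This is precisely the fact already exploited in the proof of \Cref{prop:Hc-as-E-cov-eff}; here I merely carry the extra variable $Z$ along. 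I expect verifying this identity carefully from the definition of a stochastic intervention to be the main obstacle, as everything else is bookkeeping.

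Given this identity, the remaining steps are routine. First, the averaging measure factors as $p_{Z,X}^{\dop(X=X')}(z,x) = p_{X'}(x)\, p_{Z}^{\dop(X=x)}(z)$, matching the outer $\E_{x \sim p_{X'}}$ and inner $\E_{z \sim p_{Z}^{\dop(X=x)}}$ of the left-hand side. Second, conditioning the post-stochastic joint additionally on $\{Z=z\}$ yields $p_{Y \mid Z=z, X=x}^{\dop(X=X')} = p_{Y \mid Z=z}^{\dop(X=x)}$, so the inner entropies agree: $H_{Y \sim p_{Y \mid Z=z, X=x}^{\dop(X=X')}}(Y) = H(Y \mid Z=z, \dop(X=x))$. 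Substituting both facts into the expansion of the right-hand side reproduces the nested expectation defining $H_{c}(Y \mid Z, \dop(X \sim X'))$, which completes the argument.
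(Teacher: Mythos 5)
Your proposal is correct and follows essentially the same route as the paper: both expand the nested expectation defining $H_{c}(Y \mid Z, \dop(X\sim X'))$ and invoke the lemma that conditioning on $X=x$ after the stochastic intervention $\dop(X=X')$ recovers the atomic interventional law (applied both to the averaging measure over $(x,z)$ and to the inner conditional distribution of $Y$), together with $p_{X}^{\dop(X=X')}=p_{X'}$. The only difference is cosmetic: you match the two sides from the right-hand side inward, whereas the paper rewrites the left-hand side as a double sum and substitutes term by term.
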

Since this conditioning set is a superset of the conditioning set in $H(Y\mid X, \dop(X=X'))$, we can use the fact that conditioning cannot increase entropy to conclude that the conditional causal entropy cannot be larger than the causal entropy.
\begin{proposition}[Conditioning Reduces Causal Entropy]
  \label[proposition]{prop:cond-reduces-hc}
  The conditional causal entropy is never larger than the causal entropy.
  \emph{I.e.} for $X$, $Y$, $Z$ and $X'$ as in \Cref{def:cond-hc}, we have:
  \begin{equation}
    H_{c}(Y \mid Z, \dop(X \sim X')) \le H_{c}(Y \mid \dop(X \sim X')).
  \end{equation}
\end{proposition}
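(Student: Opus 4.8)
The plan is to reduce this inequality to the elementary information-theoretic fact that conditioning on additional variables never increases entropy, applied to a single fixed joint distribution. The two reformulation results already established do almost all of the work: \Cref{prop:Hc-as-E-cov-eff} identifies the causal entropy with an ordinary conditional entropy, $H_{c}(Y \mid \dop(X \sim X')) = H(Y \mid X, \dop(X=X'))$, while \Cref{prop:cond-hc-as-cond-entr} identifies the conditional causal entropy with another ordinary conditional entropy, $H_{c}(Y \mid Z, \dop(X \sim X')) = H(Y \mid Z, X, \dop(X=X'))$.

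First I would observe that both right-hand sides are conditional entropies computed with respect to one and the same distribution, namely the post-stochastic-intervention joint distribution $p^{\dop(X=X')}$ over $(X, Y, Z)$ (and any further endogenous variables of $\cC$). This is the crucial point: once the stochastic intervention $\dop(X=X')$ has been performed, we are left with a fixed, honest joint distribution, and the symbols $H(Y \mid X, \dop(X=X'))$ and $H(Y \mid Z, X, \dop(X=X'))$ denote genuine conditional entropies of that one distribution, differing only in their conditioning sets $\{X\}$ and $\{Z, X\}$.

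Next I would invoke the standard inequality that conditioning cannot increase entropy \citep{thomas2006information}: for any joint distribution, $H(Y \mid X) \ge H(Y \mid X, Z)$. Applying this within $p^{\dop(X=X')}$ --- that is, treating $\dop(X=X')$ as fixing the measure throughout --- yields $H(Y \mid X, \dop(X=X')) \ge H(Y \mid Z, X, \dop(X=X'))$. Chaining this with the two identities above gives $H_{c}(Y \mid \dop(X \sim X')) \ge H_{c}(Y \mid Z, \dop(X \sim X'))$, which is exactly the claim.

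The argument is essentially immediate, so there is no serious obstacle in the calculation itself. The one point that warrants care --- and which I would make explicit --- is the justification that the two causal quantities really are conditional entropies of the \emph{same} underlying distribution, so that the classical inequality is legitimately applicable; this rests entirely on the intervene-then-condition convention adopted in \Cref{def:cond-hc} together with the reformulations in \Cref{prop:Hc-as-E-cov-eff} and \Cref{prop:cond-hc-as-cond-entr}. Had one instead adopted a condition-then-intervene semantics, the two entropies would live on different distributions and this clean reduction would fail.
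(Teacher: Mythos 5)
Your proof is correct and follows essentially the same route as the paper's: both rewrite the two causal entropies as ordinary conditional entropies of the post-stochastic-intervention distribution via \Cref{prop:Hc-as-E-cov-eff} and \Cref{prop:cond-hc-as-cond-entr}, and then apply the standard fact that enlarging the conditioning set cannot increase entropy. Your explicit remark that both quantities live on the \emph{same} distribution $p^{\dop(X=X')}$ is a worthwhile clarification that the paper leaves implicit, but it does not change the argument.
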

Utilizing \Cref{prop:Hc-as-E-cov-eff} and \Cref{prop:cond-hc-as-cond-entr} to express causal entropy and conditional causal entropy as conditional entropies enables us to use the standard chain rule for conditional entropy to derive a chain rule for causal entropy.
\begin{proposition}[Two-variable Chain Rule for Causal Entropy] %
  \label[proposition]{prop:hc-2-var-chain-rule}
  Let $X$, $Y$, $Z$ and $X'$ be as in \Cref{def:cond-hc}.
  Then:
  \begin{equation}
    \label{eq:Hc-chain-rule}
    H_{c}(Y, Z \mid \dop(X \sim X')) = H_{c}(Y \mid \dop(X \sim X')) + H_{c}(Z \mid Y, \dop(X\sim X')).
  \end{equation}
\end{proposition}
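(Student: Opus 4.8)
The plan is to reduce the causal chain rule to the ordinary chain rule for conditional entropy, evaluated in the probability space induced by the stochastic intervention $\dop(X=X')$. The two ingredients that make this reduction possible are \Cref{prop:Hc-as-E-cov-eff}, which expresses causal entropy as a post-stochastic-intervention conditional entropy, and \Cref{prop:cond-hc-as-cond-entr}, which does the same for conditional causal entropy.

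First I would rewrite each of the three terms appearing in \Cref{eq:Hc-chain-rule} as a genuine conditional entropy in the distribution $p^{\dop(X=X')}$. Applying \Cref{prop:Hc-as-E-cov-eff} to the (possibly vector-valued) target $(Y,Z)$ gives
\[
  H_{c}(Y, Z \mid \dop(X \sim X')) = H(Y, Z \mid X, \dop(X=X')).
\]
Applying the same proposition to $Y$ alone yields $H_{c}(Y \mid \dop(X \sim X')) = H(Y \mid X, \dop(X=X'))$, and \Cref{prop:cond-hc-as-cond-entr} turns the conditional term into $H_{c}(Z \mid Y, \dop(X\sim X')) = H(Z \mid Y, X, \dop(X=X'))$. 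Substituting these three identities, the claim becomes the single statement
\[
  H(Y, Z \mid X, \dop(X=X')) = H(Y \mid X, \dop(X=X')) + H(Z \mid Y, X, \dop(X=X')).
\]
Next I would observe that this is exactly the standard two-variable chain rule for conditional entropy, $H(A,B\mid C) = H(A\mid C) + H(B\mid A, C)$, with $A=Y$, $B=Z$ and $C=X$, except that every entropy is computed with respect to the joint distribution $p^{\dop(X=X')}$ over the endogenous variables of $\cC$ rather than the observational one. Since the stochastic intervention $\dop(X=X')$ induces a bona fide joint probability distribution over all these variables, the classical chain rule applies verbatim in that distribution, which closes the argument.

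The only point requiring care—the main (and mild) obstacle—is the bookkeeping: justifying that \Cref{prop:Hc-as-E-cov-eff} may be invoked with a random vector $(Y,Z)$ in the target slot, and that the conditioning symbol ``$\mid X$'' in all three rewrites refers to the same post-intervention variable $X$, so that the conditioning sets line up to produce $\{X\}$, $\{X\}$, and $\{X,Y\}$ respectively. Both are immediate: the conventions of \Cref{sec:formal-setting} permit the target to be a vector, and all three rewrites condition \emph{after} the same stochastic intervention $\dop(X=X')$. Once this alignment is verified, no fresh computation is needed beyond citing the ordinary chain rule.
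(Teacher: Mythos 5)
Your proof is correct and follows exactly the paper's own argument: rewrite all three causal entropies as post-stochastic-intervention conditional entropies via \Cref{prop:Hc-as-E-cov-eff} and \Cref{prop:cond-hc-as-cond-entr}, then invoke the standard chain rule for conditional entropy in the distribution $p^{\dop(X=X')}$. No differences worth noting.
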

Similarly, we can obtain a chain rule specifically for random vectors by leveraging the general chain rule for the conditional entropy.
\begin{proposition}[Chain Rule for Causal Entropy]
  \label[proposition]{prop:chain-rule-hc}
  Let $\mathbf{Y}$ be a random vector of length $n_{Y}$ and $\mathbf{Y}_{<i} = Y_1, \ldots, Y_{i-1}$.
  Then:
  \begin{equation}
    H_{c}(\mathbf{Y} \mid \dop(X \sim X')) =
    \sum_{i=1}^{n_{Y}} H_{c}(Y_{i} \mid \mathbf{Y}_{<i}, \dop(X \sim X')).
  \end{equation}
\end{proposition}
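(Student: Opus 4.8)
The plan is to reduce this multivariate chain rule to the ordinary chain rule for conditional entropy applied inside the single fixed post-stochastic-intervention distribution $p^{\dop(X=X')}$. The two identities that make this reduction possible are already in hand: \Cref{prop:Hc-as-E-cov-eff} rewrites causal entropy as a genuine conditional entropy, $H_{c}(\mathbf{Y}\mid \dop(X\sim X')) = H(\mathbf{Y}\mid X, \dop(X=X'))$, while \Cref{prop:cond-hc-as-cond-entr} does the same for each conditional causal entropy, giving $H_{c}(Y_{i}\mid \mathbf{Y}_{<i}, \dop(X\sim X')) = H(Y_{i}\mid \mathbf{Y}_{<i}, X, \dop(X=X'))$. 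Both identities are statements about one and the same joint distribution $p^{\dop(X=X')}$ over $(X, Y_{1}, \ldots, Y_{n_{Y}})$, which is the crucial point that lets us treat everything as ordinary information theory computed in that distribution.

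First I would apply \Cref{prop:Hc-as-E-cov-eff} to the left-hand side, turning $H_{c}(\mathbf{Y}\mid \dop(X\sim X'))$ into the conditional entropy $H(\mathbf{Y}\mid X, \dop(X=X'))$ of the random vector $\mathbf{Y} = (Y_{1}, \ldots, Y_{n_{Y}})$ conditioned on $X$ within $p^{\dop(X=X')}$. Next I would invoke the standard chain rule for conditional entropy \citep{thomas2006information}, which states that in any fixed distribution, for a conditioning variable $W$, one has $H(Y_{1}, \ldots, Y_{n_{Y}}\mid W) = \sum_{i=1}^{n_{Y}} H(Y_{i} \mid Y_{1}, \ldots, Y_{i-1}, W)$. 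Reading this inside $p^{\dop(X=X')}$ with $W = X$ yields $H(\mathbf{Y}\mid X, \dop(X=X')) = \sum_{i=1}^{n_{Y}} H(Y_{i} \mid \mathbf{Y}_{<i}, X, \dop(X=X'))$. Finally I would rewrite each summand using \Cref{prop:cond-hc-as-cond-entr} in reverse, identifying $H(Y_{i} \mid \mathbf{Y}_{<i}, X, \dop(X=X'))$ with $H_{c}(Y_{i} \mid \mathbf{Y}_{<i}, \dop(X\sim X'))$. Composing the three steps gives the claimed identity.

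Since the argument is essentially a translation into the language of ordinary conditional entropy followed by one application of the classical chain rule, I do not anticipate a genuine mathematical obstacle; the only thing demanding care is the bookkeeping of conditioning sets. Specifically, I must ensure that the chain rule is applied to exactly the conditioning set $\{X\} \cup \mathbf{Y}_{<i}$ that \Cref{prop:cond-hc-as-cond-entr} produces, so that the variable $X$ introduced by the stochastic intervention sits inside the conditioning set of every term and is neither dropped nor double-counted as the index $i$ increases. It is equally important to verify that all three expressions genuinely refer to the single distribution $p^{\dop(X=X')}$, because the standard chain rule is valid only within one fixed distribution; this requirement is exactly what the two invoked propositions guarantee.
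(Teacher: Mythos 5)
Your proposal is correct and follows essentially the same route as the paper's proof: rewrite both sides as conditional entropies under the single distribution $p^{\dop(X=X')}$ via \Cref{prop:Hc-as-E-cov-eff} and \Cref{prop:cond-hc-as-cond-entr}, apply the standard chain rule with $X$ in the conditioning set, and translate back. The only cosmetic difference is that the paper cites \Cref{prop:cond-hc-as-cond-entr} for the first step where you cite \Cref{prop:Hc-as-E-cov-eff}; these coincide for an empty conditioning set, so the arguments are identical.
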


\section{Properties of Causal Information Gain}
\label{sec:causal-inf-gain}
We start this section by noting basic properties of causal information gain.
We continue by defining conditional causal information gain and deriving a chain rule for causal information gain.
This section ends with a discussion about the interpretation of conditional causal information gain and post-intervention mutual information.
The latter is a distinct quantity from conditional causal information gain which serves as an alternative reasonable extension of conditional information gain in the context of causality.

\subsection{Immediate Properties of Causal Information Gain}
\label{sec:prop-caus-mutu}
A few properties of causal information gain can be immediately gleaned from
its definition.
In contrast with mutual information, causal information gain is
\emph{not} symmetric.
Similarly to causal entropy, one needs to specify an intervention protocol $X'$ which specifies the probability of each atomic intervention on $X$.
As shown in \citet{simoes2023causal}, if $X$ has no total effect on $Y$, then $I_{c}(Y \mid \dop(X \sim X')) = 0$ for any protocol $X'$.
In contrast with mutual information, causal information gain can be negative.
This is an immediate corollary of \Cref{prop:Hc-lessthan-H}.
\begin{corollary}
  \label[corollary]{cor:Ic-can-be-negative}
  For some SCMs and intervention protocols $X'$, the causal information gain of $Y$ given the intervention protocol $X'$ for $X$ is negative, \emph{i.e.} $I_{c}(Y \mid \dop(X \sim X')) < 0$.
\end{corollary}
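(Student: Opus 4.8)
The plan is to read this off directly from the definition of causal information gain together with \Cref{prop:Hc-lessthan-H}. Recall from \Cref{def:Ic} that
\begin{equation}
  I_{c}(Y \mid \dop(X \sim X')) = H(Y) - H_{c}(Y \mid \dop(X \sim X')),
\end{equation}
so that the sign of the causal information gain is entirely controlled by the comparison between the prior entropy $H(Y)$ and the causal entropy $H_{c}(Y \mid \dop(X \sim X'))$: the gain is negative precisely when the causal entropy strictly exceeds the prior entropy.

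First I would invoke \Cref{prop:Hc-lessthan-H}, which asserts the existence of an SCM and an intervention protocol $X'$ for which $H_{c}(Y \mid \dop(X \sim X')) > H(Y)$. Fixing such a witnessing SCM and protocol, I would substitute this strict inequality into the identity above to obtain $I_{c}(Y \mid \dop(X \sim X')) = H(Y) - H_{c}(Y \mid \dop(X \sim X')) < 0$, which is exactly the claim.

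There is essentially no obstacle at the level of the corollary: it is a one-line substitution once the defining identity is recalled, and the existential quantifier in the statement simply inherits the witness supplied by \Cref{prop:Hc-lessthan-H}. All the genuine content is deferred to that proposition, whose proof must exhibit a concrete SCM and protocol realizing $H_{c} > H$. I would note in passing that the SCM of \Cref{ex:h_differs_from_hc} does \emph{not} serve as such a witness, since there $H(Y) = \tfrac{3}{2}$ while $H_{c}(Y \mid \dop(X \sim X')) = 1$, yielding a strictly positive gain; thus a different example is needed to make \Cref{prop:Hc-lessthan-H} (and hence this corollary) go through.
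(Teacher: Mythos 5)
Your proposal matches the paper exactly: the paper also derives this corollary as an immediate consequence of \Cref{prop:Hc-lessthan-H} via the defining identity $I_{c}(Y \mid \dop(X \sim X')) = H(Y) - H_{c}(Y \mid \dop(X \sim X'))$, with all substantive work deferred to the witness SCM constructed in that proposition's proof. Your side remark that the SCM of \Cref{ex:h_differs_from_hc} would not serve as a witness (there $H(Y)=\tfrac{3}{2} > 1 = H_{c}$) is also correct.
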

Since KL divergences are non-negative, this means in
particular that causal information gain cannot be written as a KL divergence of
two distributions, again contrary to its non-causal counterpart.

One of the most important results in information theory is the data processing
inequality. It tells us that, for a Markov chain
$X \rightarrow Y \rightarrow Z$, $Z$ can never have more information about $X$
than $Y$ has \citep{thomas2006information}.
It is natural to wonder whether a similar result holds for causal information gain.
Specifically, one might ask if, for a causal chain $X \rightarrow Y \rightarrow Z$, the causal information gain of $Z$ for $X$ can never be larger than the causal information gain of $Y$ for $X$.
Such a proposition aligns with the intuitive notion that we have less control over variables ``farther away'' from the intervened variable.
Surprisingly, this is false.
No such causal data processing inequality holds for causal information gain.
To see this, we just need to devise an SCM whose causal graph is a chain $X \rightarrow Y \rightarrow Z$ and $I_{c}(Y \mid \dop(X \sim X')) < I_{c}(Z \mid \dop(X \sim X'))$.
This situation can arise, for instance, if performing an atomic intervention on $X$ still results in some uncertainty regarding a subset of values of $Y$, but every such value leads to the same $Z$.
\begin{example} %
  Consider a causal chain $X \rightarrow Y \rightarrow Z$ with ranges $R_{X} = \{x_{1}, x_{2}\}$, $R_{Y} = \{y_{1}, y_{2}, y_{3}\}$ and $R_{Z} = \{z_{1}, z_{2}\}$, and whose structural assignments and noise distributions are given by:
  \begin{equation*}
    \begin{array}{l}
      N_{X} \sim \mathcal{U}[R_{X}] \\
      N_{Y} \sim \mathcal{U}\{y_{1}, y_{2}\} \\
      X \ccolon N_{X} \\
    \end{array}
    \quad
      Y \ccolon
    \begin{cases}
      N_{Y}, \quad\!\!\! X = x_{1} \\
      y_{3}, \quad X = x_{2}
    \end{cases} \\
    \quad
    Z \ccolon
    \begin{cases}
      z_{1}, \quad Y = y_{1} \text{ or } Y = y_{2} \\
      z_{2}, \quad Y = y_{3}
    \end{cases} \\
  \end{equation*}
  Furthermore, we choose an intervention protocol $X'$ with a point mass at $x_{1}$, \emph{i.e.} $X' \sim \delta(x_{1})$ .
  Then $H(X) = H(Z) = 1$ and $H(Y) = 2\times \frac{1}{4} \log(4) + \frac{1}{2} \log(2) = \frac{3}{2}$.
  The relevant causal entropies are $H_{c}(Y \mid \dop(X\sim X')) = H_{c}(Y \mid \dop(X = x_{1})) = 1$ and $H_{c}(Z \mid \dop(X\sim X')) = H_{c}(Z \mid \dop(X = x_{1})) = 0$.
  Hence $I_{c}(Y \mid \dop(X\sim X')) = \frac{1}{2} < I_{c}(Z \mid \dop(X\sim X')) = 1$.
\end{example}

\subsection{Conditional Causal Information Gain and the Chain Rule}
\label{sec:cond-info-gain-and-chain-rule}
Recall the definition of conditional mutual information in \Cref{eq:cond-mut-info}.
It captures how much the uncertainty about $Y$ is reduced on average after
one \emph{observes} $X$, \emph{if one knows $Z$}.
Similarly, the
``conditional causal information gain'' will be defined such that it captures
how much the uncertainty about $Y$ is reduced on average if one \emph{sets} $X$ to $x$
with probability $p_{X'}(x)$, \emph{and\footnotemark one knows $Z$}.
\footnotetext{The word ``and'' does not establish the order between intervening and conditioning. As explained in \Cref{sec:cond-caus-entr}, we assume that interventions take precedence. See also the discussion in \Cref{sec:condIc-interpretation}.}
Accordingly, its definition will be the gap between the conditional entropy of $Y$ conditioned on $Z$ and the conditional causal entropy of $Y$ conditioned on $Z$ given an intervention protocol $X'$ for $X$.
\begin{definition}[Conditional Causal Information
  Gain] %
  \label[definition]{def:cond-info-gain}
  Let $X$, $Y$ and $Z$ be endogenous variables of an SCM $\mathcal{C}$.
  The \emph{conditional causal information gain} $I_{c}(Y \mid Z, \dop(X\sim X'))$ of $Y$ conditioned on $Z$ given the intervention protocol $X'$ for $X$ is defined as follows:
  \begin{equation}
    \label{eq:cond-info-gain}
    I_{c}(Y \mid Z, \dop(X\sim X')) \ccolon H(Y \mid Z) - H_{c}(Y \mid Z, \dop(X\sim X')).
  \end{equation}
\end{definition}

We can now leverage the chain rule for the causal entropy in
\Cref{prop:chain-rule-hc} to obtain a chain rule for the causal information gain.
\begin{proposition}[Chain Rule for Causal Information Gain]
  \label[proposition]{prop:chain-rule-Ic}
  Let $\mathbf{Y}$ be a random vector of length $n_{Y}$ and $\mathbf{Y}_{<i} = Y_1, \ldots, Y_{i-1}$.
  Then:
  \begin{equation}
    \label{eq:chain-rule-Ic}
    I_{c}(\mathbf{Y} \mid \dop(X \sim X)) = \sum_{i=1}^{n_{Y}} I_{c}(Y_{i} \mid \mathbf{Y}_{<i}, \dop(X \sim X)).
  \end{equation}
\end{proposition}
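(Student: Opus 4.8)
The plan is to reduce the statement to the chain rule for causal entropy (\Cref{prop:chain-rule-hc}) together with the ordinary chain rule for entropy, and then to recognize each resulting summand as a conditional causal information gain via \Cref{def:cond-info-gain}. Starting from the definition of causal information gain (\Cref{def:Ic}), I would write
\begin{equation*}
  I_{c}(\mathbf{Y} \mid \dop(X \sim X')) = H(\mathbf{Y}) - H_{c}(\mathbf{Y} \mid \dop(X \sim X')),
\end{equation*}
so that the problem decomposes into expanding the two terms on the right separately.

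First I would expand $H(\mathbf{Y})$ using the standard chain rule for entropy, $H(\mathbf{Y}) = \sum_{i=1}^{n_{Y}} H(Y_{i} \mid \mathbf{Y}_{<i})$ \citep{thomas2006information}, and expand $H_{c}(\mathbf{Y} \mid \dop(X \sim X'))$ using \Cref{prop:chain-rule-hc}, namely $H_{c}(\mathbf{Y} \mid \dop(X \sim X')) = \sum_{i=1}^{n_{Y}} H_{c}(Y_{i} \mid \mathbf{Y}_{<i}, \dop(X \sim X'))$. Since both sums range over the same index set, I would then combine them term by term and apply \Cref{def:cond-info-gain} with $Z = \mathbf{Y}_{<i}$:
\begin{align*}
  I_{c}(\mathbf{Y} \mid \dop(X \sim X'))
  &= \sum_{i=1}^{n_{Y}} \Big[ H(Y_{i} \mid \mathbf{Y}_{<i}) - H_{c}(Y_{i} \mid \mathbf{Y}_{<i}, \dop(X \sim X')) \Big] \\
  &= \sum_{i=1}^{n_{Y}} I_{c}(Y_{i} \mid \mathbf{Y}_{<i}, \dop(X \sim X')).
\end{align*}

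The only delicate point, and what little ``obstacle'' there is, concerns the bookkeeping for the $i = 1$ term, where the conditioning tuple $\mathbf{Y}_{<1}$ is empty. I would adopt the convention that conditioning on the empty tuple is vacuous, so that $H(Y_{1} \mid \mathbf{Y}_{<1}) = H(Y_{1})$ and $H_{c}(Y_{1} \mid \mathbf{Y}_{<1}, \dop(X \sim X')) = H_{c}(Y_{1} \mid \dop(X \sim X'))$; this makes the first summand equal to $I_{c}(Y_{1} \mid \dop(X \sim X'))$, in agreement with the unconditioned definition in \Cref{def:Ic}. With this convention in place the argument is entirely routine, as all the substantive content has already been absorbed into the causal-entropy chain rule of \Cref{prop:chain-rule-hc} and the definition of conditional causal information gain.
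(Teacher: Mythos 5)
Your proof is correct and follows exactly the same route as the paper's: expand $I_{c}(\mathbf{Y}\mid \dop(X\sim X'))$ via its definition, apply the standard chain rule to $H(\mathbf{Y})$ and \Cref{prop:chain-rule-hc} to $H_{c}(\mathbf{Y}\mid \dop(X\sim X'))$, and recognize each summand via \Cref{def:cond-info-gain}. The remark about the empty conditioning set for $i=1$ is a sensible bookkeeping point that the paper leaves implicit.
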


\subsection{Choices and Interpretations}
\label{sec:condIc-interpretation}
In the causal inference literature, if both the $\dop$ operator and a random variable appear after the conditioning bar, it is to be understood that the intervention precedes conditioning \citep{peters2017elements}.
We chose to respect this convention, so that here too interventions precede conditioning.
Indeed, the second term in \Cref{eq:cond-info-gain} relies on conditional causal entropy, which itself respects this convention.
Another choice was made by using $H(Y\mid Z)$ as the first term in \Cref{eq:cond-info-gain}.
Notice that this is the average entropy of $Y$ due to conditioning on $Z$, \emph{with respect to the pre-intervention joint distribution} $p_{Y, Z}$.
We will now look at the interpretation of conditional causal information gain and subsequenty introduce and interpret the quantity resulting from making a different choice for this first term, originating another reasonable causal generalization of conditional mutual information.

\Cref{def:cond-info-gain} tells us that $I_{c}(Y \mid Z, \dop(X \sim X'))$ is the information that is gained about $Y$ if one intervenes on $X$ before observing $Z$ as opposed to only observing $Z$.
In other words, it measures how much intervening on $X$ improves the information that is gained about $Y$ by observing $Z$.
\begin{example}
  \label{ex:contrast-agent}
  Radiologists often use substances called contrast agents before performing MRI (magnetic resonance imaging) scans to enhance image quality.
  Consult the causal graph in \Cref{fig:causal-graphs-examples}{(a)}.
  We want to assess the impact of using a contrast agent ($X=1$) on the information that can be extracted about the disease $Y$ from the MRI image $Z$.
  This is represented by the conditional causal information gain $I_{c}(Y \mid Z, \dop(X \sim X')) = H(Y \mid Z) - H(Y \mid Z, \dop(X = 1))$, where $X'$ was chosen to have a point mass at $1$.
  This precisely measures the information gained about disease $Y$ by employing the contrast agent before viewing the image $Z$, as opposed to solely observing $Z$ without the use of the contrast agent.

  In a case where $X$ has no total causal effect on $Z$ such as the one depicted in \Cref{fig:causal-graphs-examples}(b), $I_{c}(Y \mid Z, \dop(X \sim X'))$ can also be interpreted as the average information that is gained by intervening in strata of the population with the same value of $Z$.
  This comes about because in such situations the order between conditioning and intervening is irrelevant.
  In the case depicted this would be the information that is gained about the probability of patients having a stroke given that we intervene on their blood pressure, averaged over the age groups $Z=z$.
\end{example}
\begin{figure}[h]
  \centering
  \begin{subfigure}
    \centering
    \begin{tikzpicture}[mynode/.style={circle,draw=black,fill=white,inner sep=0pt,minimum size=0.8cm}, scale=0.8]
        \node[mynode] (x) at (0,0) {$X$};
          \node[yshift=+6.1pt] at (x.north) {\scriptsize Contrast Agent};
        \node[mynode, label={\scriptsize Disease}] (y) at (2,0) { $Y$};
        \node[mynode] (z) at (1,-1.75) {$Z$};
          \node[yshift=-5pt] at (z.south) {\scriptsize Image};
        \path [draw,->] (x) edge[-latex] (z);
        \path [draw,->] (y) edge[-latex] (z);
    \end{tikzpicture}
  \end{subfigure}
  \begin{subfigure}
    \centering
    \begin{tikzpicture}[mynode/.style={circle,draw=black,fill=white,inner sep=0pt,minimum size=0.8cm}, scale=0.8]
        \node[mynode, label={\scriptsize Blood Pressure}] (x) at (0,0) { $X$};
        \node[mynode, label={\scriptsize Stroke}] (y) at (2,0) {$Y$};
        \node[mynode] (z) at (1,-1.75) {$Z$};
          \node[yshift=-5pt] at (z.south) {\scriptsize Age};
        \path [draw,->] (z) edge[-latex] (x);
        \path [draw,->] (z) edge[-latex] (y);
        \path [draw,->] (x) edge[-latex] (y);
    \end{tikzpicture}
  \end{subfigure}
  \caption[Caption for cond Ic examples]{Causal graphs illustrating the interpretations of $I_{c}$ and $\mathrm{MI}_{c}$.}
  \label{fig:causal-graphs-examples}
\end{figure}
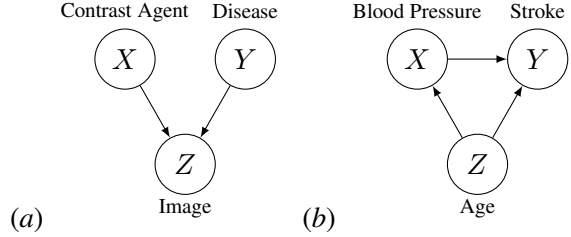

We will now see that replacing $H(Y \mid Z)$ with the causal entropy $H_{c}(Y \mid \dop(X \sim X'))$ in the first term of \Cref{eq:cond-info-gain} also results in a sensible quantity.
\begin{definition}
  \label[definition]{def:post-int-mutual-info}
  Let $X$, $Y$ and $Z$ be endogenous variables of an SCM $\mathcal{C}$.
  The \emph{post-intervention mutual information} $\mathrm{MI}_{c}(Y \mid Z, \dop(X \sim X'))$ between $Y$ and $Z$ given the intervention protocol $X'$ for $X$ is defined as follows:
  \begin{equation}
    \label{eq:post-int-mutual-info}
    \mathrm{MI}_{c}(Y \mid Z, \dop(X \sim X')) \ccolon H_{c}(Y \mid \dop(X\sim X')) - H_{c}(Y \mid Z, \dop(X \sim X')).
  \end{equation}
\end{definition}
This quantity measures the average information that is gained about $Y$ due to observing $Z$, given that $X$ was intervened on following the protocol $X'$.
There should therefore be a close connection between $\mathrm{MI}_{c}(Y \mid Z, \dop(X \sim X'))$ and the mutual information between $Y$ and $Z$ after one performs an atomic intervention on $X$\footnotemark.
Indeed, the post-intervention mutual intervention turns out to be the average mutual information between $Y$ and $Z$ after an atomic intervention on $X$:
\footnotetext{This is the reason for the name of this quantity.}
\begin{proposition}
  \label[proposition]{prop:mic-is-average-mi}
  Let $X'$ be an intervention protocol for $X$.
  Then $\mathrm{MI}_{c}(Y \mid Z, \dop(X \sim X')) = \E_{x\sim p_{X'}}\left[ I(Y ; Z \mid \dop(X = x)) \right]$, where the notation $I(Y ; Z \mid \dop(X=x))$ indicates that the mutual information is computed with respect to the joint post-atomic-intervention distribution.
\end{proposition}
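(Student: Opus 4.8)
The plan is to unfold both terms of $\mathrm{MI}_{c}(Y \mid Z, \dop(X \sim X'))$ using their definitions and then to recognize that, within each atomic intervention $\dop(X=x)$, their difference is precisely a standard mutual information computed in the interventional distribution $p^{\dop(X=x)}$. First I would expand \Cref{eq:post-int-mutual-info}, rewriting the first term $H_{c}(Y \mid \dop(X\sim X'))$ as $\E_{x\sim p_{X'}}[H(Y \mid \dop(X=x))]$ and the second term $H_{c}(Y \mid Z, \dop(X\sim X'))$ as $\E_{x\sim p_{X'}}[H_{c}(Y \mid Z, \dop(X=x))]$, both directly from their defining equations (the latter from \Cref{def:cond-hc}).

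By linearity of expectation, the difference collapses into a single expectation over $x\sim p_{X'}$ of the quantity $H(Y \mid \dop(X=x)) - H_{c}(Y \mid Z, \dop(X=x))$. The key observation is that, by \Cref{def:cond-hc}, the atomic conditional causal entropy $H_{c}(Y \mid Z, \dop(X=x))$ is exactly the ordinary conditional entropy of $Y$ given $Z$ computed with respect to the fixed post-atomic-intervention distribution $p^{\dop(X=x)}$, while $H(Y \mid \dop(X=x))$ is the ordinary entropy of $Y$ in that same distribution. Hence the integrand is the entropy-reduction form of mutual information (\Cref{eq:mi-entr-form}) applied \emph{within} the distribution $p^{\dop(X=x)}$, which is by definition $I(Y ; Z \mid \dop(X=x))$.

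Substituting this identification back yields $\mathrm{MI}_{c}(Y \mid Z, \dop(X \sim X')) = \E_{x\sim p_{X'}}[I(Y ; Z \mid \dop(X=x))]$, as desired. I do not anticipate a genuine obstacle here: the result is essentially a bookkeeping exercise in which linearity of expectation over the intervention protocol commutes with the entropy-difference definition of mutual information. The one point requiring care is to verify explicitly that the atomic conditional causal entropy coincides with the interventional conditional entropy entering $I(Y;Z\mid \dop(X=x))$; but this is immediate once one notes that both are the average of $H(Y \mid Z=z, \dop(X=x))$ taken over $z \sim p_{Z}^{\dop(X=x)}$, so no separate lemma is needed.
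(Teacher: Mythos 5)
Your proposal is correct and follows essentially the same route as the paper's proof: expand both terms of $\mathrm{MI}_{c}$ as expectations over $x\sim p_{X'}$, use linearity to combine them, and recognize the integrand as the entropy-reduction form of mutual information in the post-atomic-intervention distribution. Your explicit remark that the atomic conditional causal entropy coincides with the interventional conditional entropy is exactly the identification the paper relies on implicitly.
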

\begin{example}
  We now revisit the scenario outlined in \Cref{ex:contrast-agent} and juxtapose the meanings of conditional causal information gain and post-intervention mutual information within this context.
  Recall that the intervention protocol had been chosen to be $X' \sim \delta(1)$.
  While $I_{c}(Y \mid Z, \dop(X \sim X'))$ quantifies the impact that the contrast agent $X$ has on the information that is gained about $Y$ by observing $Z$, $\mathrm{MI}_{c}(Y \mid Z, \dop(X \sim X')) =  I(Y ; Z \mid \dop(X = 1))$ is the information that is gained about the disease $Y$ by observing the image $Z$, given that the contrast agent has been injected.

  For the case of \Cref{fig:causal-graphs-examples}(b), $\mathrm{MI}_{c}(Y \mid Z, \dop(X \sim X'))$ is simply the average information that is gained about the probability of having a stroke by knowing the age, given that one intervened on blood pressure following a chosen intervention protocol.
\end{example}

\section{Related Work}
\label{sec:related-work}

We build upon the work developed in \citet{simoes2023causal}, which was itself inspired in work from the philophy of science literature \citep{griffiths2015specificity}.
\Citet{simoes2023causal} define causal counterparts of entropy and mutual information, referred to as causal entropy and causal information gain.
These metrics are designed to evaluate the extent to which a feature has control over a chosen outcome variable.
They do this by capturing changes in the entropy of a variable resulting from intervening on other variables.
The authors also study the relationship between these quantities and the existence of total causal effect.

Another causal generalization of mutual information had been proposed before  \citep{ay2008information}.
This quantity, termed ``information flow'', is both conceptually and numerically distinct from causal information gain:
Information flow was introduced both as a  measure of ``causal strength'' and of ``causal independence'', similarly to how standard mutual information is a measure of statistical independence.
This is accomplished by starting from the definition of mutual information as a KL divergence and proceeding to ``make it causal'' by replacing conditioning with interventions.
In contrast, \citet{simoes2023causal} treat entropy as the main quantity of interest.
They start from the definition of mutual entropy as the change in entropy due to conditioning, and define causal entropy as the change in entropy due to intervening.
This then results in a quantity that is appropriate for evaluating the control that a variable has over another, where control is taken to be how much one can reduce the uncertainty about the second by intervening on the first.
As confirmation that these two causal generalizations of mutual information are indeed distinct, one can simply notice that the former can be written as a KL divergence \citep{ay2008information}, while the latter cannot (\Cref{cor:Ic-can-be-negative}).
It should be noted  that there are metrics other than the ACE and the information flow which purport to measure strength.
See \citet{janzing2013quantifying} for a compilation of such metrics.
From those, the only one conceptually close to ours is information flow, given that it also serves as a causal version of mutual information.

\section{Discussion and Conclusion}
\label{sec:disc-concl}
The motivation for extending traditional entropy and mutual information to interventional settings stems from the desire to develop algorithms that utilize information theoretical quantities in the presence of non-causal statistical dependencies (\emph{e.g.} due to unobserved confounding).
Extending these quantities to handle interventions allows them to capture the effects of manipulating one variable on another.
The causal entropy, conditional causal entropy, causal information gain, and conditional causal information gain, together with their basic properties proved herein provide the foundation for developing new algorithms in areas where one has or can obtain knowledge of the causal relationships involved.

Establishing practical methods for computing these quantities remains a topic for future investigation.
This includes designing appropriate estimators, evaluating their performance characteristics (such as consistency, bias and rate of convergence), and determining whether these estimators are identifiable based on the available information about the underlying causal structure.
Furthermore, one can leverage the foundational results established in this paper (such as the chain rules) to aid the computation of causal control between variables in a complex structural causal model.
More generally, the properties presented herein can aid in examining how intervening on a variable within a structural causal model impacts the uncertainty associated with other variables.
As alluded to in the Introduction, potential applications of these concepts include guiding action selection in reinforcement learning, devising causal adaptations of entropy-based decision tree algorithms, and developing causal versions of representation learning algorithms which rely on information theoretical quantities such as the mutual information between the representation and a target variable.
On the theoretical front, one may try to generalize other important results from information theory to this causal information theoretical framework.
Additionally, other causal generalizations of conditional causal entropy and information gain can be studied, in particular those for which the assumption that interventions are performed before conditioning is dropped.
Lastly, the precise connection between causal information gain and putative measures of causal strength, such as information flow, could be studied in detail.

\section*{Acknowledgments}
This publication is part of the CAUSES project (KIVI.2019.004) of the research programme Responsible Use of Artificial Intelligence which is financed by the Dutch Research Council (NWO) and ProRail.

\bibliography{clear24library.bib}

\appendix

\section{Structural Causal Models}
\label{sec:struct-caus-models}

One can model the causal structure of a system by means of a ``structural causal model'', which can be seen as a Bayesian network \citep{koller2009probabilistic} whose graph $G$ has a causal interpretation and each conditional probability distribution (CPD) $P(X_{i} \mid \PA_{X_{i}})$ of the Bayesian network stems from a deterministic function $f_{X_{i}}$ (called ``structural assignment'') of the parents of $X_{i}$.
In this context, it is common to separate the parent-less random variables (which are called ``exogenous'' or ``noise'' variables) from the rest (called ``endogenous'' variables).
Only the endogenous variables are represented in the structural causal model graph.
As is commonly done \citep{peters2017elements}, we assume that the noise variables are jointly independent and that exactly one noise variable $N_{X_{i}}$ appears as an argument in the structural assignment $f_{X_{i}}$ of $X_{i}$.
In full rigor\footnotemark \citep{peters2017elements}:

\footnotetext{We slightly rephrase the definition provided in \citet{peters2017elements} for our purposes. \label{fn:def}}

\begin{definition}[Structural Causal Model]
  \label[definition]{def:scm}
  Let $X$ be a random variable with range $R_{X}$ and $\mathbf{W}$ a random vector with range $R_{\mathbf{W}}$.
  A \emph{structural assignment for $X$ from $\mathbf{W}$} is a function $f_{X}\colon R_{\mathbf{W}} \to R_{X}$.
  A \emph{structural causal model (SCM)} $\mathcal{C} = (\mathbf{X}, \mathbf{N}, S, p_{\mathbf{N}})$ consists of:
  \begin{enumerate}
    \item A random vector $\mathbf{X} = (X_{1}, \ldots, X_{n})$ whose variables we call \emph{endogenous}.
    \item A random vector $\mathbf{N} = (N_{X_{1}}, \ldots, N_{X_{n}})$ whose variables we call \emph{exogenous} or \emph{noise}.
    \item A set $S$ of $n$ structural assignments $f_{X_{i}}$ for $X_{i}$ from ($\PA_{X_{i}}, N_{X_{i}}$), where $\PA_{X_{i}} \subseteq \mathbf{X}$ are called \emph{parents} of $X_{i}$.
      The \emph{causal graph} $G^{\mathcal{C}}\vcentcolon=(\mathbf{X}, E)$ of $\mathcal{C}$ has as its edge set $E = \{(P, X_{i}) : X_{i} \in \mathbf{X},\  P\in \PA_{X_{i}}\}$.
      The $\PA_{X_{i}}$ must be such that the $G^{\mathcal{C}}$ is a directed acyclic graph (DAG).
    \item A jointly independent probability distribution $p_{\mathbf{N}}$ over the noise variables. We call it simply the \emph{noise distribution}.
  \end{enumerate}
\end{definition}

  We denote by $\cC(\mathbf{X})$ the set of SCMs with vector of endogenous variables $\mathbf{X}$.
  Furthermore, we write $X \vcentcolon= f_{X}(X, N_{X})$ to mean that $f_{X}(X, N_{X})$ is a structural assignment for $X$.

  Notice that for a given SCM the noise variables have a known distribution $p_{\mathbf{N}}$ and the endogenous variables can be written as functions of the noise variables.
  Therefore the distributions of the endogenous variables are themselves determined if one fixes the SCM.
  This brings us to the notion of the entailed distribution\footref{fn:def} \citep{peters2017elements}:

\begin{definition}[Entailed distribution]
  Let $\mathcal{C} = (\mathbf{X}, \mathbf{N}, S, p_{\mathbf{N}})$ be an SCM. Its \emph{entailed distribution} $p^{\mathcal{C}}_{\mathbf{X}}$  is the unique joint distribution over $\mathbf{X}$ such that $\forall X_{i} \in \mathbf{X},\ X_{i} = f_{X_{i}}(\PA_{X_{i}}, N_{X_{i}})$.
  It is often simply denoted by $p^{\cC}$.
  Let $\mathbf{x}_{-i}\vcentcolon= (x_{1}, \ldots, x_{i-1}, x_{i+1}, \ldots, x_{n})$.
  For a given $X_{i} \in \mathbf{X}$, the marginalized distribution $p^{\cC}_{X_{i}}$ given by $p^{\cC}_{X_{i}}(x_{i}) = \sum_{\mathbf{x}_{-i}} p^{\cC}_{\mathbf{X}}(\mathbf{x})$ is also referred to as \emph{entailed distribution (of $X_{i}$)}.
\end{definition}

Having an SCM in hand allows us to model interventions on the system. The idea
is that an SCM represents how the values of the random variables are generated,
and by intervening on a variable system we are effectively changing its
generating process -- read: its structural assignment. Thus intervening on a
variable can be modeled by modifying the structural assignment of said variable,
resulting in a new SCM differing from the original only in the structural
assignment of the intervened variable, and possibly introducing a new noise
variable for it, in place of the old one. Naturally, the new SCM will have an
entailed distribution which is in general different from the distribution
entailed by the original SCM.

\begin{definition}[General intervention]
  Let $\cC = (\mathbf{X}, \mathbf{N}, S, p_{\mathbf{N}})$ be an SCM,
  $X_{i} \in \mathbf{X}$, $\widetilde{\PA}_{X_{i}} \subseteq \mathbf{X}$,
  $\tilde{N}_{i}$ be a random variable and $\tilde{f}_{X_{i}}$ be a structural
  assignment for $X_{i}$ from $\widetilde{\PA}_{X_{i}}, \tilde{N}_{i}$.

  The \emph{intervention
    $\dop(X_{i} = \tilde{f}_{X_{i}}(\widetilde{\PA}_{X_{i}}, \tilde{N}_{i}))$}
  is the function $\cC(\mathbf{X}) \to \cC(\mathbf{X})$ given by
  $\cC \mapsto \cC^{\dop(X_{i} = \tilde{f}_{X_{i}}(\widetilde{\PA}_{X_{i}}, \tilde{N}_{i}))}$,
  where
  $\cC^{\dop(X_{i} = \tilde{f}_{X_{i}}(\widetilde{\PA}_{X_{i}}, \tilde{N}_{i}))}$
  is the ordered pair
  $(\mathbf{X}, \tilde{\mathbf{N}}, \tilde{S}, p_{\tilde{N}})$, with
  $\tilde{\mathbf{N}} = (\mathbf{N}\setminus \{N_{i}\}) \cup \{\tilde{N}_{i}\}$
  and $\tilde{S} = (S \setminus \{f_{X_{i}}\}) \cup \{\tilde{f}_{X_{i}}\}$. We
  call it the \emph{post-intervention SCM (w.r.t. the intervention
    $\dop(X_{i} = \tilde{f}_{X_{i}}(\widetilde{\PA}_{X_{i}}, \tilde{N}_{i}))$)}.
  It is also denoted
  $\tilde{\cC}:=\cC^{\dop(X_{i} = \tilde{f}_{X_{i}}(\widetilde{\PA}_{X_{i}}, \tilde{N}_{i}))}$.

  Note that in order for $\tilde{\cC}$ to be an SCM, $\widetilde{\PA}_{X_{i}}$
  must be such that the causal graph $G^{\tilde{\cC}}$ is a DAG.

  We then say that the variable $X_{i}$ was \emph{intervened on}.

  The distribution
  $p^{\dop(X_{i} = \tilde{f}_{X_{i}}(\widetilde{\PA}_{X_{i}}, \tilde{N}_{i}))} := p^{\tilde{\cC}}$
  entailed by $\tilde{\cC}$ is called the \emph{post-intervention distribution
    (w.r.t. the intervention
    $\dop(X_{i} = \tilde{f}_{X_{i}}(\widetilde{\PA}_{X_{i}}, \tilde{N}_{i}))$ on
    $\cC$)}.
\end{definition}

The most common type of interventions are the so-called ``atomic
interventions'', where one sets a variable to a chosen value, effectively
replacing the distribution of the intervened variable with a point mass
distribution. In particular, this means that the intervened variable has no
parents after the intervention.
\begin{definition}[Atomic intervention]
  Let $\cC = (\mathbf{X}, \mathbf{N}, S, p_{\mathbf{N}})$ be an SCM and
  $X_{i} \in \mathbf{X}$. An \emph{atomic intervention on $X_{i}$} is an
  intervention of the type $\dop(X_{i} = \tilde{N}_{i})$, where $\tilde{N}_{i}$
  is a random variable with range $R_{X_{i}}$ and
  $p_{\tilde{N_{i}}}(x_{i}) = \delta_{x, x_{i}}$ for some $x\in R_{X_{i}}$. Such
  an intervention is usually denoted simply by $\dop(X_{i} = x)$.
\end{definition}

Another special type of intervention is the ``stochastic intervention''
\citep{korb2004varieties}, where again the intervened variable has no parents,
but its distribution can be any distribution. Thus, an atomic intervention is a
particular type of stochastic intervention.
\begin{definition}[Stochastic intervention]
  Let $\cC = (\mathbf{X}, \mathbf{N}, S, p_{\mathbf{N}})$ be an SCM and
  $X_{i} \in \mathbf{X}$. A \emph{stochastic intervention on $X_{i}$} is an
  intervention of the type $\dop(X_{i} = \tilde{N}_{i})$, where $\tilde{N}_{i}$
  is a random variable with range $R_{X_{i}}$ and $p_{\tilde{N_{i}}}$ can be any
  probability distribution.
\end{definition}

We can also define what we mean by ``$X$ having a total causal effect on $Y$''.
Following \citet{peters2017elements,pearl2009causality}, there is such a total
causal effect if there is an atomic intervention on $X$ which modifies the
initial distribution of $Y$.

\begin{definition}[Total Causal Effect]
  Let $X$, $Y$ be random variables of an SCM $\mathcal{C}$. $X$ has a
  \emph{total causal effect on} $Y$ if there is $x\in R_{X}$ such that
  $p^{\dop(X=x)}_{Y} \ne p_{Y}$. We then write $X \veryshortrightarrow Y$.
\end{definition}

\section{Relating Stochastic and Atomic Post-intervention Distributions}
We will here prove a lemma that is used to prove many of the results in this
paper.%
\begin{lemma}[Atomic Intervention Equals Conditioning After Stochastic Intervention]
  \label[lemma]{lemma:atomic-vs-stochastic}
  Let $X$ be an endogenous variable of an SCM $\mathcal{C}$ and $\mathbf{Y}$ be a vector of endogenous variables of $\mathcal{C}$ distinct from $X$.
  Furthermore, let $X'$ be an intervention protocol for $X$ and $x$ be an $X$-value in the support of $p_{X'}$.
  The post-intervention distribution of $\mathbf{Y}$ resulting from an atomic intervention
  $\dop(X=x)$ equals the conditional post-intervention distribution of $\mathbf{Y}$
  resulting from the stochastic intervention $\dop(X=X')$ and conditioning on
  $X=x$. That is:
  \begin{equation}
    \label{eq:atomic-vs-stochastic}
    p_{\mathbf{Y}}^{\dop(X=x)} = p_{\mathbf{Y}\mid X=x}^{\dop(X=X')}.
  \end{equation}
\end{lemma}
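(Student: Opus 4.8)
The plan is to compute both sides of \Cref{eq:atomic-vs-stochastic} directly from the causal Markov factorizations of the two post-intervention SCMs and then observe that they coincide term by term. The guiding intuition is that the stochastic intervention $\dop(X=X')$ turns $X$ into a parent-less (root) variable whose entailed marginal is $p_{X'}$; conditioning such a root variable on a particular value $X=x$ then merely pins that value down while leaving every other structural mechanism untouched, which is exactly the effect of the atomic intervention $\dop(X=x)$. The hypothesis that $x$ lies in the support of $p_{X'}$ enters precisely to guarantee that the conditioning event has positive probability, so that $p_{\mathbf{Y}\mid X=x}^{\dop(X=X')}$ is well defined.

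First I would fix notation, writing the full vector of endogenous variables as $\mathbf{X}$, which contains $X$, the entries of $\mathbf{Y}$, and possibly further variables collected in $\mathbf{W}$. By the definition of a stochastic intervention, $\tilde{\cC} \vcentcolon= \cC^{\dop(X=X')}$ differs from $\cC$ only in the structural assignment of $X$, whose factor becomes $p_{X'}$ and no longer depends on $\PA_X$; every other factor $p(v \mid \PA_v)$ is inherited unchanged from $\cC$. Hence the entailed joint factorizes as
\begin{equation*}
  p^{\tilde{\cC}}(\mathbf{x}) = p_{X'}(x)\prod_{V \in \mathbf{X}\setminus\{X\}} p(v \mid \PA_v).
\end{equation*}
Marginalizing over $\mathbf{W}$ and dividing by the marginal $p^{\tilde{\cC}}_X(x) = p_{X'}(x)$, which is nonzero by the support assumption, the factor $p_{X'}(x)$ cancels, leaving
\begin{equation*}
  p_{\mathbf{Y}\mid X=x}^{\dop(X=X')}(\mathbf{y}) = \sum_{\mathbf{w}} \prod_{V \in \mathbf{X}\setminus\{X\}} p(v \mid \PA_v),
\end{equation*}
where $X$ is held at $x$ in every parent set in which it occurs.

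For the atomic side, the post-intervention SCM $\cC^{\dop(X=x)}$ again differs from $\cC$ only in the assignment of $X$, which now becomes a point mass at $x$, while its remaining factors are identical to those of $\tilde{\cC}$. Marginalizing the entailed joint over $\mathbf{W}$ and using the point mass to fix $X=x$ produces exactly the same expression $\sum_{\mathbf{w}} \prod_{V} p(v \mid \PA_v)$ obtained above, which establishes \Cref{eq:atomic-vs-stochastic}. I expect the main obstacle to be purely bookkeeping: one must handle cleanly the endogenous variables $\mathbf{W}$ lying outside $\{X\}\cup\mathbf{Y}$ and verify that the marginalization over $\mathbf{W}$ acts on identical products of factors in both computations. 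This holds precisely because the two interventions modify only the mechanism of $X$ and leave all other mechanisms intact, so once this is spelled out the two marginals agree summand by summand.
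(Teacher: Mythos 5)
Your proof is correct, but it takes a genuinely different route from the paper's. You compute both sides explicitly via the truncated (causal Markov) factorization of the entailed distributions: the stochastic intervention makes $X$ a root with factor $p_{X'}(x)$, conditioning on $X=x$ cancels that factor (which is where the support hypothesis enters, exactly as you say), and what remains is the same product of unchanged mechanisms, with $X$ pinned to $x$, that the atomic intervention produces. The paper instead argues at the level of SCM transformations: it introduces $\mathrm{Do}$ and $\mathrm{Cond}$ operators, shows that $\cC^{\dop(X=X')}$ followed by $\dop(X=x)$ yields an SCM differing from $\cC^{\dop(X=x)}$ only by a childless auxiliary variable $X'$ (hence the same marginal on $\mathbf{Y}$), and then invokes the fact that intervening on a parentless variable is equivalent to conditioning on it. Note that this last fact is precisely what your factorization computation proves from scratch, so your argument is in that sense more self-contained and elementary, at the price of having to invoke the Markov factorization of the entailed distribution (which the paper's appendix leaves implicit in its remark that an SCM is a Bayesian network) and of some bookkeeping over the auxiliary variables $\mathbf{W}$, which you handle correctly. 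The paper's operator argument avoids writing any factorization but leans on the intervening-equals-conditioning-for-roots fact as known. Both are valid; yours would be slightly strengthened by one sentence justifying why the factors $p(v\mid \PA_v)$ for $V\neq X$ are identical across $\cC$, $\cC^{\dop(X=X')}$ and $\cC^{\dop(X=x)}$ (namely, the structural assignments and noise distributions of those variables are untouched and the noises remain jointly independent).
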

\begin{proof}
  This proof will be easier if we introduce operators corresponding to
  intervening and conditioning. That will allow us to not overburden our
  notation with subscripts and superscripts.
  Denote by $\mathbf{X}$ the set of endogenous variables of $\cC$, and by
  $\mathcal{M}_{\mathcal{P}(\mathbf{X})}$ the set of all probability mass
  functions for any of the variable subsets in the powerset
  $\mathcal{P}(\mathbf{X})$ of $\mathbf{X}$.
  Let $x$ be an $X$-value. Define operators
  $\mathrm{Do}[X=x]\colon \mathcal{M}_{\mathcal{P}(\mathbf{X})} \to \mathcal{M}_{\mathcal{P}(\mathbf{X})}$
  and
  $\mathrm{Cond}[X=x]\colon \mathcal{M}_{\mathcal{P}(\mathbf{X})} \to \mathcal{M}_{\mathcal{P}(\mathbf{X})}$
  such that
  $\forall \mathbf{Y} \subseteq \mathbf{X}\backslash X,\ \mathrm{Do}[X=x](p_{\mathbf{Y}}) = p^{\dop(X=x)}_{\mathbf{Y}}$ and
  $\forall \mathbf{Y} \subseteq \mathbf{X}\backslash X,\ \mathrm{Cond}[X=x](p_{\mathbf{Y}}) = p_{\mathbf{Y} \mid X=x}$.
  Let $\mathbf{Y} \subseteq \mathbf{X}\backslash X$.
  Then what we want to prove can be written
  \begin{equation}
    \label{eq:lemma_equivalent}
    \mathrm{Cond}[X=x] \left(\mathrm{Do}[X=X'] \left(p_{\mathbf{Y}} \right) \right)
    = \mathrm{Do}[X=x]  \left(p_{\mathbf{Y}} \right).
  \end{equation}
  Denote by $f_{X}$ the structural assignment for $X$ in $\mathcal{C}$. The only
  difference between $\mathcal{C}^{\dop(X=X')}$ and $\mathcal{C}$ is that
  $f_{X}(\PA_{X}, N_{X})$ is replaced by another structural assignment
  $\tilde{f}_{X}(\tilde{N}_{X} = X') = X'$ and there is a new variable $X'$ in
  $\mathcal{C}^{\dop(X=X')}$ which did not figure in $\mathcal{C}$. If one
  proceeds by performing the intervention $\dop(X=x)$, one obtains the SCM
  $(\mathcal{C}^{\dop(X=X')})^{\dop(X=x)}$ which differs from
  $\mathcal{C}^{\dop(X=X')}$ only in that $\tilde{f}_{X}(X')$ is replaced by
  $\tilde{\tilde{f}}_{X}(\tilde{\tilde{N}}_{X} \sim \delta_{x}) = \tilde{\tilde{N}}_{X}$.
  Notice that the marginal distribution
  $p_{\mathbf{Y}}^{(\mathcal{C}^{\dop(X=X')})^{\dop(X=x)}}$ entailed by this SCM is
  precisely the result of
  $\mathrm{Do}[X=x]\left(\mathrm{Do}[X=X'](p_{\mathbf{Y}})\right)$. Furthermore, the SCM
  entailing the RHS of \eqref{eq:lemma_equivalent} differs from
  $(\mathcal{C}^{\dop(X=X')})^{\dop(X=x)}$ only in that the latter contains a
  childless exogenous variable $X'$. Therefore they entail the same marginal
  distribution of $\mathbf{Y}$. Hence:
  $$\mathrm{Do}[X=x]\left(\mathrm{Do}[X=X'](p_{\mathbf{Y}})\right) = \mathrm{Do}[X=x]  \left(p_{\mathbf{Y}} \right).$$
  On the other hand, starting again with $\mathcal{C}$ and setting $X=X'$ sets
  $\PA_{X} = \emptyset$, which means that intervening on $X$ after that will be
  equivalent to conditioning:
  $$\mathrm{Do}[X=x] \left(\mathrm{Do}[X=X'] (p_{\mathbf{Y}}) \right)
  = \mathrm{Cond}[X=x] \left(\mathrm{Do}[X=X'] (p_{\mathbf{Y}}) \right).$$
  Hence \eqref{eq:lemma_equivalent} holds, which proves the lemma.
\end{proof}

\section{Proofs for Results on Causal Entropy}

\subsection*{Computations for \Cref{ex:h_differs_from_hc}}
\label{comp:h-from-table}

\begin{equation}
  \begin{cases}
    p^{\dop(X=0)}_{Y}(0) = p_{N_{Y}}(0) = \nicefrac{1}{2} \\
    p^{\dop(X=0)}_{Y}(1) = p_{N_{Y}}(1) = \nicefrac{1}{2} \\
    p^{\dop(X=0)}_{Y}(2) = 0
  \end{cases}
\end{equation}

\begin{equation}
  \begin{cases}
    p^{\dop(X=1)}_{Y}(0) = 0 \\
    p^{\dop(X=1)}_{Y}(1) = p_{N_{Y}}(0) = \nicefrac{1}{2} \\
    p^{\dop(X=1)}_{Y}(2) = p_{N_{Y}}(1) = \nicefrac{1}{2}
  \end{cases}
\end{equation}

\begin{equation}
  \begin{cases}
    p^{\dop(X=X')}_{Y}(0) = p_{X'}(0)p_{N_{Y}}(0) = \frac{2}{3} \times \frac{1}{2} = \nicefrac{1}{3} \\
    p^{\dop(X=X')}_{Y}(1) = p_{X'}(0) p_{N_{Y}}(1) + p_{X'}(1) p_{N_{Y}}(0) = \nicefrac{1}{2} \\
    p^{\dop(X=X')}_{Y}(2) = p_{X'}(1)p_{N_{Y}}(1) = \frac{1}{3} \times \frac{1}{2} = \nicefrac{1}{6}
  \end{cases}
\end{equation}

\begin{align}
  \begin{split}
    H_{c}(Y\mid \dop(X\sim X')) =& - p_{X'}(0) \bigg(p_{Y}^{\dop(X=0)}(0) \log p_{Y}^{\dop(X=0)}(0) \\
                                 &\hspace{1.8cm} + p_{Y}^{\dop(X=0)}(1) \log p_{Y}^{\dop(X=0)}(1) \\
                                 &\hspace{1.8cm} + p_{Y}^{\dop(X=0)}(2) \log p_{Y}^{\dop(X=0)}(2)\bigg) \\
                                 &- p_{X'}(1) \bigg(p_{Y}^{\dop(X=1)}(0) \log p_{Y}^{\dop(X=1)}(0) \\
                                 &\hspace{1.8cm} + p_{Y}^{\dop(X=1)}(1) \log p_{Y}^{\dop(X=1)}(1)  \\
                                 &\hspace{1.8cm}+p_{Y}^{\dop(X=1)}(2) \log p_{Y}^{\dop(X=1)}(2)\bigg) \\
    =& - 2/3 \times (-1/2 - 1/2) - 1/3 \times  (0 - 1/2 - 1/2) = 1 (\mathrm{bit})
  \end{split}
\end{align}

\begin{align}
  \begin{split}
    H(Y\mid \dop(X=X')) =& p^{\dop{(X=X')}}_{Y}(0) \log(\nicefrac{1}{p_{Y}^{\dop(X=X')}(0)})\\
                         &\hspace{0.2cm} + p^{\dop{(X=X')}}_{Y}(1) \log(\nicefrac{1}{p_{Y}^{\dop(X=X')}(1)}) \\
                         &\hspace{0.2cm} + p^{\dop{(X=X')}}_{Y}(2) \log(\nicefrac{1}{p_{Y}^{\dop(X=X')}(2)}) \\
    =&\frac{1}{3}\log 3 + \frac{1}{2} + \frac{1}{6}\log 6.
  \end{split}
\end{align}

\begin{proof}[Proof of \Cref{prop:cH-single-E}]
  \label{proof:cH-single-E}
  \begin{align}
    H_{c}(Y\mid \dop(X\sim X')) & \equiv - \E_{x'\sim p_{X'}}\left[ \E_{y\sim p_{Y}^{\dop(X=x')}}\left[ \log p_{Y}^{\dop(X=x')}(y) \right]  \right]  \\
                                & \equiv - \sum_{x',y} p_{X'}(x') p_{Y}^{\dop(X=x')}(y)  \log p_{Y}^{\dop(X=x')}(y) \\
                                & = - \sum_{x',y} p^{\dop(X=X')}_{X}(x') p_{Y\mid X=x'}^{\dop(X=X')}(y)  \log p_{Y}^{\dop(X=x')}(y)
                                  \label{eq:hc_critical_step} \\
                                & = - \sum_{x,y} p_{X,Y}^{\dop(X=X')}(x,y)  \log p_{Y}^{\dop(X=x')}(y) \\
                                & = - \E_{x,y \sim p_{X,Y}^{\dop(X=X')}} \left[ \log p(y \mid \dop(X=x)) \right]
                                  \label{eq:hc_final_expression}
  \end{align}
  where to get \Cref{eq:hc_critical_step} we used that
  $p_{X'}(x) =p ^{\dop(X=X')}_{X}(x)$ and \Cref{lemma:atomic-vs-stochastic}.
\end{proof}

\begin{proof}[Proof of \Cref{prop:Hc-as-E-cov-eff}]
  \label{proof:Hc-as-E-cov-eff}
  This comes directly from the definition of $H_{c}$ and
  \Cref{lemma:atomic-vs-stochastic}:
  \begin{align}
    \begin{split}
      H_{c}(Y\mid \dop(X &\sim  X')) = \E_{x\sim p_{X'}} \left[ H_{Y \sim p^{\dop(X=x)}_{Y}}(Y) \right] \\
                        &= \E_{x\sim p_{X'}} \left[ H_{Y\sim p_{Y\mid X=x}^{\dop(X=X')}}(Y) \right].
    \end{split}
  \end{align}
\end{proof}

\begin{proof}[Proof of \Cref{prop:hc-nonnegative}] %
  The result follows directly from the definition of causal entropy. We can
  write:
  \begin{equation}
    H_{c}(Y\mid \dop(X \sim X'))
    = \E_{x'\sim p_{X'}} \left[ H_{Y \sim p^{\dop(X=x')}_{Y}}(Y) \right].
  \end{equation}
  Since entropies are non-negative quantities, it follows that the causal
  entropy, being an average of entropies, is also non-negative.
\end{proof}

\begin{proof}[Proof of
  \Cref{prop:Hc-lessthan-H}] %
  \label{proof:Hc-lessthan-H}
  It suffices to provide an SCM $\mathcal{C}$ and intervention protocol $X'$
  such that the causal entropy of $Y$ given the intervention protocol $X'$ for
  $X$ is greater than the initial entropy of $Y$.
  Consider an SCM $\cC$ with exactly two binary endogenous variables $X, Y$, causal graph $X \rightarrow Y$ and structural assignments given by:
  \begin{equation}
    \begin{cases}
      X = N_{X} \\
      Y =
      \begin{cases}
        y_{0}, X = x_{0} \\
        N_{Y}, X = x_{1}
      \end{cases}
    \end{cases}.
  \end{equation}
  Furthermore, assume that $N_{Y} \sim \mathrm{Bern}(\nicefrac{1}{2})$, $N_{X} \sim \mathrm{Bern}(\nicefrac{1}{10})$ and $X' \sim \mathrm{Bern{(\nicefrac{1}{2})}}$.
  Notice that this implies:
  \begin{equation}
    \begin{cases}
      p_{Y}^{\dop(X=x_{0})} = p_{Y \mid X=x_{0}} = \delta(x_{0}) \\
      p_{Y}^{\dop(X=x_{1})} = p_{Y \mid X=x_{1}} = \mathrm{Bern}(\nicefrac{1}{2}) \\
    \end{cases}.
  \end{equation}
  Then:
  \begin{align}
    \begin{split}
      H_{c}(Y \mid \dop(X \sim X')) &= \E_{x \sim p_{X'}} \left[ H_{Y \sim p^{\dop(X=x)}_{Y}}(Y)\right] \\
                                    &= \E_{x \sim p_{X'}} \left[ H_{Y \sim p_{Y \mid X=x}}(Y)\right] \\
                                    &= p_{X'}(x_{0}) \underbrace{H_{Y \sim p_{Y \mid X = x_{0}}}(Y)}_{0} + p_{X'}(x_{1}) \underbrace{H_{Y \sim p_{Y \mid X = x_{1}}}(Y)}_{1} \\
                                    &= p_{X'}(x_{1}) = \nicefrac{1}{2}
    \end{split}
  \end{align}
  where in the second equality we used that the causal effect from $X$ to $Y$ is
  not confounded.
  \begin{align}
    \begin{split}
      H(Y) &= -\sum_{y} p_{Y}(y) \log\left(p_{Y}(y) \right) \\
          &= -\sum_{x, y} p_{Y \mid X=x}(y) p_{X}(x) \log \left(\sum_{\dot{x}} p_{Y \mid X = \dot{x}}(y) p_{X}(\dot{x})\right) \\
          &= -\sum_{x} p_{X}(x) \sum_{y} p_{Y \mid X=x}(y) \log \Big( p(y \mid x_{0}) p_{X}(x_{0}) + p(y \mid x_{1}) p_{X}(x_{1}) \Big) \\
          &= -p_{X}(x_{0}) \log \Big( p_{X}(x_{0}) + \frac{1}{2} p_{X}(x_{1}) \Big) \\
           &\qquad - \frac{1}{2} p_{X}(x_{1}) \Big[\log \Big( p_{X}(x_{0}) + \frac{1}{2} p_{X}(x_{1}) \Big) + \log \Big(\frac{1}{2}p_{X}(x_{1})\Big)\Big] \approx 2.04.
    \end{split}
  \end{align}
  Hence $H_{c}(Y \mid X \sim X') > H(Y)$ for this SCM and intervention protocol.
\end{proof}

\begin{proof}[Proof of \Cref{cor:Hc-lessthan-stochH}]
  Conditional entropy cannot be greater than the entropy before conditioning.
  By \Cref{prop:Hc-as-E-cov-eff} causal entropy is a conditional entropy obtained from the post-stochastic intervention entropy by conditioning.
  The result follows.
\end{proof}

\begin{proof}[Proof of \Cref{prop:ind-bound}]
  \label{proof:ind-bound}
  We make use of \Cref{prop:Hc-as-E-cov-eff} and the independence bound on
  (standard) entropy:
  \begin{align}
    \begin{split}
      H_{c}(\mathbf{Y} \mid \dop(X \sim X')) &= H(\mathbf{Y} \mid X, \dop(X = X')) \\
                                                              &\le  \sum_{i=1}^{n_{Y}}H(Y_{i} \mid X, \dop(X = X')) \\
                                                              &=  \sum_{i=1}^{n_{Y}}H_{c}(Y_{i} \mid \dop(X \sim X')).
    \end{split}
  \end{align}
\end{proof}

\begin{proof}[Proof of \Cref{prop:cond-hc-as-cond-entr}]
  \label{proof:cond-hc-as-cond-entr}
  \begin{align}
    \begin{split}
      H_{c}(Y &\mid Z, \dop(X\sim X')) = \E_{x\sim p_{X'}} \left[ \E_{z\sim p^{\dop(X=x)}_{Z}}\left[H_{Y \sim p^{\dop(X=x)}_{Y \mid Z=z}}(Y)\right] \right] \\
              &= \sum_{x, z} p_{Z}^{\dop(X=x)}(z) p_{X'}(x) H_{Y \sim p^{\dop(X=x)}_{Y \mid Z=z}}(Y) \\
              &= \sum_{x, z} p_{Z \mid X=x}^{\dop(X=X')}(z) p_{X}^{\dop(X=X')}(x) H_{Y \sim p^{\dop(X=X')}_{Y \mid X=x, Z=z}}(Y) \\
              &= \sum_{x, z} p_{X,Z}^{\dop(X=X')}(x,z) H(Y \mid X=x, Z=z, \dop(X=X')) \\
              &= H(Y \mid X, Z, \dop(X=X'))
    \end{split}
  \end{align}
  where in the third step we used \Cref{lemma:atomic-vs-stochastic} twice, and in the
  last one we used the definition of conditional entropy.
\end{proof}

\begin{proof}[Proof of \Cref{prop:cond-reduces-hc}]
  \label{proof:cond-reduces-hc}
  We will use \Cref{prop:cond-hc-as-cond-entr} and \Cref{prop:Hc-as-E-cov-eff}
  together with the fact that conditional entropy can never be larger than the
  initial entropy \citep{thomas2006information} to prove the result:
  \begin{align}
    \begin{split}
      H_{c}(Y \mid Z, \dop(X \sim X')) &= H(Y \mid Z, X, \dop(X=X')) \\
                                      &\le H(Y \mid X, \dop(X=X')) \\
                                      &= H_{c}(Y \mid \dop(X \sim X')).
    \end{split}
  \end{align}
\end{proof}

\begin{proof}[Proof of \Cref{prop:hc-2-var-chain-rule}]
  \label{proof:hc-2-var-chain-rule}
  Due to \Cref{prop:Hc-as-E-cov-eff} and \Cref{prop:cond-hc-as-cond-entr} we can leverage the chain
  rule for entropy to obtain a chain rule for the causal entropy:
  \begin{align}
    \begin{split}
      H_{c}(Y, Z \mid \dop(X\sim X')) &= H(Y, Z \mid X, \dop(X=X')) \\
                                      &= H(Y \mid X, \dop(X=X')) + H(Z \mid Y, X, \dop(X=X'))\\
                                      &= H_{c}(Y \mid \dop(X\sim X')) + H_{c}(Z \mid Y, \dop(X \sim X')).
    \end{split}
  \end{align}
\end{proof}

\begin{proof}[Proof of
  \Cref{prop:chain-rule-hc}] %
  \label{proof:chain-rule-hc}
  \begin{align}
    \begin{split}
      H_{c}(\mathbf{Y} \mid \dop(X \sim X')) &= H(\mathbf{Y} \mid X, \dop(X = X')) \\
                                                              &= \sum_{i=1}^{n_{Y}} H(Y_{i} \mid \mathbf{Y}_{<i}, X, \dop(X = X')) \\
                                                              &= \sum_{i=1}^{n_{Y}} H_{c}(Y_{i} \mid \mathbf{Y}_{<i}, \dop(X \sim X')) \\
    \end{split}
  \end{align}
  where in the first step and third steps we used
  \Cref{prop:cond-hc-as-cond-entr},
  while the second equality follows from the chain rule for entropy.
\end{proof}

\section{Proofs for Results on Causal Information Gain}

\begin{proof}[Proof of
  \Cref{prop:chain-rule-Ic}] %
  \begin{align}
    \begin{split}
      I_{c}(\mathbf{Y} \mid \dop(X \sim X')) &= H(\mathbf{Y}) - H_{c}(\mathbf{Y} \mid \dop(X\sim X')) \\
                                                              &= \sum_{i=1}^{n_{Y}} \Big( H(Y_{i} \mid \mathbf{Y}_{<i}) -  H_{c}(Y_{i} \mid \mathbf{Y}_{<i}, \dop(X \sim X')) \Big) \\
                                                              &=  \sum_{i=1}^{n_{Y}} I_{c}(Y_{i} \mid \mathbf{Y}_{<i}, \dop(X \sim X'))
    \end{split}
  \end{align}
  where in the second step we used the chain rules for the entropy
  \citep{thomas2006information} and for the causal entropy
  (\Cref{prop:chain-rule-hc}).
\end{proof}

\begin{proof}[Proof of \Cref{prop:mic-is-average-mi}]
  \begin{align}
    \begin{split}
      \mathrm{MI}_{c}(Y \mid Z, \dop(X\sim X')) &= \E_{x\sim p_{X'}} \left[ H(Y\mid \dop(X = x)) \right] \\
                                               &\qquad - \E_{x\sim p_{X'}} \left[H(Y \mid Z, \dop(X = x)) \right] \\
        &=\E_{x\sim p_{X'}}\left[ H(Y\mid \dop(X = x)) - H(Y \mid Z, \dop(X = x)) \right] \\
        &= \E_{x\sim p_{X'}}\left[I(Y ; Z \mid \dop(X=x))\right]
    \end{split}
  \end{align}
\end{proof}

\end{document}